\date{}
\def\1{\bm{1}}
\DeclareMathAlphabet{\mathsfit}{\encodingdefault}{\sfdefault}{m}{sl}
\SetMathAlphabet{\mathsfit}{bold}{\encodingdefault}{\sfdefault}{bx}{n}
\definecolor{LightCyan}{rgb}{0.88,1,1}
\definecolor{Blue}{rgb}{0,0,1}
\definecolor{Gray}{gray}{0.9}
\newcommand{\specialcell}[2][c]{%
  \begin{tabular}[#1]{@{}c@{}}#2\end{tabular}}
\colorlet{pink}{red!40}
\colorlet{blue}{cyan!60}
\newcommand\mydots{\hbox to 0.7em{.\hss.\hss.}}
\newtheorem{theorem}{{Theorem}}
\newtheorem{lemma}[theorem]{{Lemma}}
\DeclareMathAlphabet{\mathcal}{OMS}{cmsy}{m}{n}
\DeclareMathAlphabet{\mathbfsl}{OT1}{ppl}{b}{it} %{OT1}{cmr}{bx}{it}
\newcommand{\bD}{\mathbfsl{D}}
\newcommand{\bF}{\mathbfsl{F}}
\newcommand{\bO}{\mathbfsl{O}}
\newcommand{\bd}{\mathbfsl{d}}
\newcommand{\bl}{\mathbfsl{l}}
\newcommand{\bv}{\mathbfsl{v}}
\newcommand{\bx}{\mathbfsl{x}}
\newcommand{\bz}{\mathbfsl{z}}
\newcommand{\bp}{\mathbfsl{p}}
\newcommand{\boldzero}{\textbf{0}}
\title{\bf{Federated Learning with Heterogeneous Differential Privacy}}
\author{Nasser Aldaghri \\ \small University of Michigan\\ \footnotesize \texttt{\href{mailto:aldaghri@umich.edu}{aldaghri@umich.edu}} \and
Hessam Mahdavifar \\ \small University of Michigan\\ \footnotesize \texttt{\href{mailto:hessam@umich.edu}{hessam@umich.edu}} \and
Ahmad Beirami \\ \small MIT \\\footnotesize \texttt{\href{mailto:beirami@mit.edu}{beirami@mit.edu}}}
\begin{document}
\maketitle

\begin{abstract}
Federated learning (FL) takes a first step towards privacy-preserving machine learning by training models while keeping client data local. Models trained using FL may still indirectly leak private client information through model updates during training. Differential privacy (DP) may be employed on model updates to provide privacy guarantees within FL, typically at the cost of degraded performance of the final trained model. Both non-private FL and DP-FL can be solved using variants of the federated averaging (\textsc{FedAvg}) algorithm. In this work, we consider a heterogeneous DP setup where clients require varying degrees of privacy guarantees. First, we analyze the optimal solution to the federated linear regression problem with \emph{heterogeneous} DP in a Bayesian setup. We find that unlike the non-private setup, where the optimal solution for homogeneous data amounts to a single global solution for all clients learned through \textsc{FedAvg}, the optimal solution for each client in this setup would be a personalized one even for homogeneous data. We also analyze the privacy-utility trade-off for this problem, where we characterize the gain obtained from the heterogeneous privacy where some clients opt for less stringent privacy guarantees. We propose a new algorithm for FL with heterogeneous DP, referred to as \textsc{FedHDP}, which employs personalization and weighted averaging at the server using the privacy choices of clients, to achieve better performance on clients' local models. Through numerical experiments, we show that \textsc{FedHDP} provides up to $9.27\%$ performance gain compared to the baseline DP-FL for the considered datasets where $5\%$ of clients opt out of DP. Additionally, we show a gap in the average performance of local models between non-private and private clients of up to $3.49\%$, empirically illustrating that the baseline DP-FL might incur a large utility cost when not all clients require the stricter privacy guarantees.
\end{abstract}

\section{Introduction}
The abundance of data and advances in computation infrastructure have enabled the training of large machine learning (ML) models that efficiently solve a host of applications. It is expected that the majority of the generated data will be stored on devices with limited computational capabilities. To improve the privacy of user data and to reduce the amount of data transmission over networks in training ML models, \cite{mcmahan2017communication} proposed the federated learning (FL) paradigm to train a model using decentralized data at clients. 
Several prior works on FL algorithms have been proposed in the literature to overcome various issues that arise in realistic FL setups~\cite{kairouz2019advances, li2020federated, wang2021field}, especially with respect to data heterogeneity  \cite{konevcny2016federated,zhao2018federated,corinzia2019variational,hsu2019measuring,karimireddy2020scaffold,reddi2020adaptive}, and device dropout and communication cost \cite{li2018federated,zhu2019multi,wang2020federated,al2020federated}.

Despite the fact that client data is kept on device in FL, the trained model at the central server is still vulnerable to various privacy attacks, such as membership inference attacks \cite{shokri2017membership} and model inversion attacks \cite{fredrikson2015model}. Privacy-preserving variations of FL algorithms have been proposed in the literature, especially building on the concept of differential privacy (DP) to bound the amount of information leakage~\cite{dwork2014algorithmic}, while noting that using DP typically causes unavoidable degradation in performance. 
DP may be employed to provide privacy guarantees at different granularities: sample level (local DP), client level (global DP), or a set of clients (considered in this work), depending on the application at hand. %
Several prior works utilized DP to provide privacy guarantees for FL algorithms. For example, \cite{truex2020ldp, sun2020ldp, kim2021federated, song2015learning} apply DP mechanisms at clients to ensure local DP guarantees, where clients have complete control over the amount of privacy they desire. On the other hand, \cite{geyer2017differentially,mcmahan2018learning, andrew2019differentially, wei2020federated, bietti2022personalization} apply DP mechanisms at the server to ensure global DP guarantees for all clients. Applying DP typically causes some degradation in utility, i.e., the model's performance degrades as the privacy budget gets smaller \cite{alvim2011differential, sankar2013utility, makhdoumi2014information, calmon2015fundamental}. Other works include utilizing DP in cross-silo federated learning setups, such as \cite{lowy2021private, liu2022privacy}. These works commonly assume that the DP requirements of clients are homogeneous, which as we shall see may be overly strict and cause unnecessary degradation in performance when DP requirements are heterogeneous.

The concept of heterogeneous DP has been studied in the literature. \cite{avent2017blender, beimel2019power} consider a hybrid model by combining local DP with global DP and give clients the option to opt into either of these notions. A \emph{blender} process is considered by \cite{avent2017blender} for computing heavy hitters where some clients opt in local DP while the remaining opt in global DP. Some drawbacks of these works include their assumption of clients' data to be IID, as well as applying local DP which requires a large number of samples at clients. These assumptions make their approach inapplicable in FL setups due to the non-IID nature of clients' data in FL and the relatively small number of samples generated by clients in FL which requires either increasing the variance of the added noise or relaxing the privacy leakage budget leading to either large degradation in performance or higher privacy leakage budgets. Another related line of work, \cite{zhou2020bypassing, ferrando2021combining, liu2021projected, amid2022public}, studies combining public and private datasets to improve the utility of the model. However, it is not clear how to extend them to FL setups, where clients arbitrarily participate in training and only have access to their local datasets which are no longer IID. Finally, some theoretical results have also been derived about heterogeneous DP in the works \cite{alaggan2016heterogeneous, jorgensen2015conservative}, where mechanisms were created for such case. %

Heterogeneity is a central feature of FL. Data heterogeneity makes it hard to train a single global model that can effectively serve all clients \cite{li2020federated}, which can be remedied via model personalization~\cite{smith2017federated,wang2019federated,arivazhagan2019federated,khodak2019adaptive,mansour2020three,fallah2020personalized,deng2020adaptive,dinh2020personalized,li2021ditto}. Another type of heterogeneity include systems heterogeneity where different devices have different capabilities, in terms of various characteristics such as connection, computational, and power capabilities \cite{li2018federated}. Solutions to system heterogeneity include designing algorithms that can tolerate device dropout, reduce communication cost, or reduce computations cost \cite{caldas2018expanding, gu2021fast, horvath2021fjord, li2018federated}. In this work, we study heterogeneity along the privacy axis, which is relatively unexplored in FL.  In this paper, we theoretically study heterogeneous privacy requirements and show that model personalization could be used to learn good (even optimal) models in FL with heterogeneous DP.

\subsection*{Organization \& Our Contributions}
In this paper, we consider a new setup for privacy-preserving FL where privacy requirements are heterogeneous across  clients. We show that clients with less strict privacy requirements, even if they represent a small percentage of the overall population, can be leveraged to improve the performance of the global model learned via FL for all clients. Our contributions and the organization of the paper are as follows:

\vspace{-0.12in}
\begin{itemize}
\itemsep0em
    \item In Section \ref{FedHDP_general}, we propose a heterogeneous setup for privacy in FL. Instead of granting the same level of privacy for all clients, each client is given the option to choose their level of privacy. Moreover, we formally pose an optimization objective for solving the problem from every client's point of view.
    
    \item In Section \ref{FLR}, we theoretically study heterogeneous DP in the simplified Bayesian setup of federated linear regression, introduced by \cite{li2021ditto}, where clients are either private or non-private (i.e., two levels of privacy). Unlike the case of \emph{non-private FL with homogeneous data\footnote{Homogeneous data refers to the case where the data for all clients is independent and identically distributed (IID).}}, where the Bayes optimal solution is a single global model that could be learned via vanilla federated averaging, the optimal Bayes solution in differentially-private FL requires personalization, even in the case of homogeneous DP. %
    Further, we characterize the privacy-utility trade-off observed at clients.

    \item In Section \ref{FedHDP}, we formally propose the FL with heterogeneous DP algorithm, referred to as \textsc{FedHDP}, for the heterogeneous privacy setup. The \textsc{FedHDP} algorithm extends the Bayes optimal solution for federated linear regression to be applicable to more realistic scenarios.
 
    \item In Section \ref{experiments}, we provide experimental results of the \textsc{FedHDP} algorithm using various synthetic and realistic federated datasets from TensorFlow Federated (TFF) \cite{TFF} using reasonable privacy parameters, where the privacy choices presented to clients are either to be private or non-private. Although the design guarantees of \textsc{FedHDP} don't apply in these complex settings, we experimentally show that it provides significant gains compared to \textsc{DP-FedAvg} algorithm \cite{andrew2019differentially}, and other stronger variants of it.
\end{itemize}

\section{Privacy Guarantees within Federated Learning}\label{FedHDP_general}
We briefly describe the federated learning (FL) setup together with existing privacy-preserving algorithms and their privacy guarantees.

\textbf{Federated learning:} FL consists of a central server that wishes to learn a global model for a set of clients denoted by $\mathcal{C}$. The clients cooperate with the server to learn a model over multiple training rounds while keeping their data on device. Each client $c_j \in \mathcal{C}$ has a local loss $f_j(\cdot)$ and a local dataset $\bD_j=\{\bd_{j_1},\bd_{j_2},\mydots,\bd_{j_{n_j}}\}$, where $\bd_{j_i}$ is the $i$-th sample at the $j$-th client. During communication round $t$, the server sends the current model state, i.e., $\boldsymbol{\theta}^t$, to the set of available clients in that round, denoted by $\mathcal{C}^t$, who take multiple gradient steps on the model using their own local datasets to minimize their local loss function $f_j(\cdot)$. The clients then return the updated model to the server who aggregates them, e.g., by taking the average, to produce the next model state $\boldsymbol{\theta}^{t+1}$. This general procedure describes a large class of learning global models with FL, such as federated averaging (\textsc{FedAvg}) \cite{mcmahan2017communication}.

\textbf{Privacy-preserving FL:} To design privacy-preserving FL algorithms using DP, certain modifications to the baseline \textsc{FedAvg} algorithm are required. In particular, the following modifications are introduced: clipping and noising. Considering client-level privacy, the averaging operation at the server is the target of such modifications. Suppose that clients are selected at each round from the population of all clients of size $N$, with a certain probability denoted by $q$. First, each client update is clipped to have a norm at most $S$, then the average is computed followed by adding a Gaussian noise with mean zero and co-variance $\sigma^2 I = z^2(\frac{S}{qN})^2 I$. The variable $z$ is referred to as the noise multiplier, which dictates the achievable values of $(\epsilon, \delta)$-DP. Training the model through multiple rounds increases the amount of leaked information. Luckily, the moment accountant method in \cite{abadi2016deep} can be used to provide a tighter estimate of the resulting DP parameters $(\epsilon, \delta)$. This method achieves client-level DP.  %
It is worth noting that the noise can be added on the client side but needs to achieve the desired resulting noise variance in the output of the aggregator at the server, which is still the desired client-level DP. Selecting the clipping threshold as well as the noise multiplier is essential to obtaining useful models with meaningful privacy guarantees. 

One issue that comes up during the training of DP models is the norm of updates can either increase or decrease; if the norm increases or decreases significantly compared to the clipping norm, the algorithm may slow down or diverge. \cite{andrew2019differentially} presented a solution to privately and adaptively update the clipping norm during each round of communication in FL based on the feedback from clients on whether or not their update norm exceeded the clipping norm. We consider this as the baseline for the privacy-preserving FL algorithm and refer to it in the rest of the paper as \textsc{DP-FedAvg}~\cite{andrew2019differentially}. The case where no noise is added is the baseline for non-private FL algorithm, which is referred to simply as \textsc{Non-Private}.

One fundamental aspect of \textsc{DP-FedAvg} is that it provides an equal level of privacy to \emph{all} clients. This is suitable for the case when all clients have similar behavior towards their own privacy in the FL setup. In other words, \textsc{DP-FedAvg} implicitly assumes a homogeneity of the privacy level is required by all clients. This is in contrast to the heterogeneity feature of FL setups, where different clients have different data, capabilities, and objectives, which also applies to privacy choices. Next, we describe our proposed setup for FL with heterogeneous DP.

\subsection*{Proposed Setup: Heterogeneous Privacy within Federated Learning}
Let us describe the proposed setup for FL with heterogeneous DP. We first start by discussing the privacy guarantees in the setup, then discuss the objectives in such setup from the server's and clients' points of view.

\textbf{Privacy guarantees:} Prior to training, each client $c_i \in \mathcal{C}$ chooses their differential privacy parameters $(\epsilon_i,\delta_i)$. We investigate what the server and clients agree upon at the beginning of training an FL model in terms of privacy to formally define the considered privacy measures. Each client $c_j$, whose dataset is denoted as $\bD_j$ that is disjoint from all other clients, requires the server to apply some randomized algorithm $A_j(\cdot)$, whose image is denoted as $\bO_j$, such that the following holds
\begin{align}
    \Pr(A_j(\bD_j) \in O_j) \leq e^{\epsilon_j} \Pr(A_j(\bD_e) \in O_j) + \delta_j, \label{client_privacy_condition} 
\end{align}
where $\bD_e$ is the empty dataset, and the relationship holds for all subsets $O_j \subseteq \bO_j$. This achieves client-level privacy with parameters $(\epsilon_j,\delta_j)$ from client $c_j$'s point of view. Now, assume we have $N$ clients each with their own privacy requirements for the server $(\epsilon_j,\delta_j)$ for $j \in [N]$, which should hold regardless of the choices made by any other client. Let us have a randomized algorithm $A(\cdot)$, which denotes the composition of all $A_j(\cdot)$'s; then, the parallel composition property of DP states that the algorithm $A(\cdot)$ is $(\epsilon_c,\delta_c)$-DP, which satisfies the following:
\begin{align}
    \Pr(A(\bD) \in O) \leq e^{\epsilon_c} \Pr(A(\bD') \in O) + \delta_c, \label{composed_privacy_condition}
\end{align}
where $\bD$ contains all datasets from all clients and $\bD'$ contains datasets from all clients but one, $\bO$ is the image of $A(\cdot)$, and the relationship holds for all neighboring datasets $\bD$ and $\bD'$ that differ by only one client and all $O \subseteq \bO$. The parallel composition property of DP states that the resulting  $\epsilon_c = \max_i \epsilon_i$, and $\delta_c = \max_i \delta_i$. Next, considering our setup, let us have $l$ sets of private clients $\mathcal{C}_i$'s. Each client in the $i$-th set of clients requires $(\epsilon_{p_i},\delta_{p_i})$-DP, and without loss of generality, assume that $\epsilon_{p_i} \geq \epsilon_{p_l}$ and $\delta_{p_i} \geq \delta_{p_l}$ $\forall i<l$. This is the case we consider in this paper, where we apply a randomized algorithm $A_{p_i}(\cdot)$, whose image is denoted as $\bO_{p_i}$, to the dataset that includes all clients in the set $\mathcal{C}_i$ and the following holds
\begin{align}
    \Pr(A_{p_i}(\bD_{p_i}) \in O_{p_i}) \leq e^{\epsilon_{p_i}} \Pr(A_{p_i}(\bD_{p_i}') \in O_{p_i}) + \delta_{p_i}, \label{FedHDP_privacy_condition}
\end{align}
where $\bD_{p_i}$ contains all datasets from all clients in $\mathcal{C}_i$ and $\bD_{p_i}'$ contains datasets from all clients in that subset except one, and the relationship holds for all neighboring datasets $\bD_{p_i}$ and $\bD_{p_i}'$ that differ by only one client and all $O_{p_i} \subseteq \bO_{p_i}$.

Now, let us assume in the proposed heterogeneous DP setup that each client in $\mathcal{C}_i$ requires $(\epsilon_{p_i},\delta_{p_i})$-DP in the sense of \eqref{client_privacy_condition}. As a result, we can see that the only way for \textsc{DP-FedAvg} to \emph{guarantee} meeting the privacy requirement for the clients in $\mathcal{C}_l$ with $(\epsilon_{p_l},\delta_{p_l})$ is to enforce $(\epsilon_{p_l},\delta_{p_l})$-DP for \emph{all} clients. In other words, \textsc{DP-FedAvg} needs to be $(\epsilon_{p_l},\delta_{p_l})$-DP, i.e., it needs to apply the strictest privacy parameters to all clients in the sense of \eqref{composed_privacy_condition}. On the other hand, in our setup, we can \emph{guarantee} meeting the privacy requirements for each set of clients by ensuring an $(\epsilon_{p_i},\delta_{p_i})$-DP for clients in $\mathcal{C}_i$, respectively, in the sense of \eqref{FedHDP_privacy_condition}. In other words, we need to only apply the appropriate DP algorithm with its appropriate metrics for each subset of clients to ensure the privacy metrics are met. This in turn results in our setup satisfying the corresponding privacy requirements needed by each set of clients, which are the main targets that need to be achieved in both algorithms from the clients' point of view in terms of their desired privacy levels.

\textbf{Objectives in FL setup:} In terms of objectives in FL setups, the server's goal is to utilize clients' updates, which may be subject to specific DP conditions, by averaging them to produce the next model state. On the other hand, clients have a different objective when it comes to their performance measures. The clients' goal is to minimize their loss function given all other clients' datasets including their own. However, since clients do not have access to other clients' raw data, a client desires to use the information from the differentially-private updates computed using the datasets by other clients as well as its own local update in order to reach a solution. Assume that the client $c_j$ observes all other clients' DP-statistics of the datasets $\{\boldsymbol{\psi}_i: i \in [N] \backslash j\}$, which are the outputs of a randomized function that satisfies the privacy condition in \eqref{client_privacy_condition}, as well as its own non-DP dataset $\bD_j$. Then the client's Bayes optimal solution is
\begin{align}
    \hspace{0.95in} \boldsymbol{\theta}_j^* = \arg \min_{\boldsymbol{\widehat{\theta}}_j} \left\{ \mathbb{E}_{\mathcal{D}_j} \left[ \ell_j(\boldsymbol{\widehat{\theta}}_j) \big| \{ \boldsymbol{\psi}_i: i\in [N] \backslash j\},  \bD_j \right]\right\}~~~~~~~~~~~~~~~~~~~ \tag{Local Bayes obj.}
    \label{eq:local-bayes}
\end{align}
where $\ell_j (\cdot)$ is the loss function used to train a model for client $c_j$, and $\mathcal{D}_j$ is the true distribution of the dataset at client $c_j$. Notice that client $c_j$ has access to their own dataset $\bD_j$ and DP-statistics of the other datasets $\{\widetilde{\bD}_i: i \in [N] \backslash j\}$. From client $c_j$'s point of view, this objective is the Bayes optimal solution when observing all DP-statistics from other clients that are subject to their privacy constraints. It is important to note that~\eqref{eq:local-bayes} is not suited for a federated optimization framework due to the fact that even individual updates from other clients are not available at the client to utilize, but rather their aggregation through a global model at the server. In practice, each client utilizes the global model $\widehat{\boldsymbol{\theta}}$ to optimize the following:
\begin{align}
   ~~~ \widehat{\boldsymbol{\theta}}_j^* = \arg \min_{\boldsymbol{\widehat{\theta}}_j} \left\{ \mathbb{E}_{\mathcal{\bD}_j} \left[ \ell_j(\boldsymbol{\widehat{\theta}}_j) \big| \widehat{\boldsymbol{\theta}}, \bD_j \right]\right\}. \label{local_bayes_objective_general} \tag{Local personalized federated obj.}
\end{align}
We notice that this solution is a form of personalization in FL, where clients no longer deploy the global model locally by default, but rather utilize it to derive local, preferably better, models that perform well on their own local dataset. In the remainder of this paper, we will demonstrate this approach's ability to learn good (even optimal as we shall see in the next section) personalized local models compared to baseline private FL. Next, we will consider the proposed setup for a simplified federated problem known as the federated linear regression.

\section{Analyzing Heterogeneous Differential Privacy in Simplified Settings}\label{FLR}
In this section, we study the heterogeneous DP problem in a simplified linear regression setup inspired by the one proposed by \cite{li2021ditto}. 
We consider a federated linear regression setup, where clients are interested in learning Bayes optimal models in the sense of \eqref{eq:local-bayes}. We briefly show that in this simplified setup, the solution could be cast as a bi-level optimization problem, which can be solved as a personalized FL problem \eqref{local_bayes_objective_general}, which motivates our subsequent algorithm design. 

We first start by considering the global estimation on the server and show the proposed solution is Bayes optimal. Then, we consider local estimations for all clients and show that the proposed solution is Bayes optimal for all clients when using appropriate values of the respective hyperparameters. We further characterize the privacy-utility trade-off gains in the Bayes optimal solution compared to variants of DP federated averaging \cite{andrew2019differentially}. The discussions in this section are brief, refer to the appendix for further and more comprehensive discussions.

\subsection{Analyzing Heterogeneous Differential Privacy in Federated Linear Regression}\label{sec:FLR}
We consider a learning setup consisting of $N$ clients, each with a fixed number of data points, where the goal of each client is to estimate $d$-dimensional linear regression coefficient vector. In linear regression, clients have observations of feature vectors, each of length $d$, and a response variable. The features linearly combine to produce the response variable, where the goal is to find the coefficients vector of length $d$. To encapsulate the property of data heterogeneity in federated learning setups, assume that such coefficient vectors are correlated across clients but they are not the same. Specifically, assume a fixed coefficient vector to be estimated at the server $\boldsymbol{\phi}$ drawn from some distribution, and assume clients' coefficient vectors $\boldsymbol{\phi}_j$'s, which will be estimated at clients, are correlated observations of such vector. This can be modeled by assuming clients' observations are noisy versions of the coefficient vector at the server, where the noise is drawn from a Gaussian distribution and its covariance controls the level of heterogeneity of data at clients. Additionally, to ensure client-level privacy, assume the algorithm is required to provide a specific client-level privacy for each subset of clients by adding Gaussian noise to the clients' updates with an appropriate covariance. Without loss of generality, assume the privacy noise is added on the client's side which achieves the desired level at the server after the aggregation of updates. Denote the estimate of $\boldsymbol{\phi}_j$ at client $c_j$ as $\widehat{\boldsymbol{\phi}}_j$, and its DP update sent to the server as $\boldsymbol{\psi}_j$, then the server's and clients' goals are to minimize the Bayes risk (i.e., test error), defined as follows
\begin{gather}
    \boldsymbol{\theta}^* := \arg \min_{\widehat{\boldsymbol{\theta}}} \left\{ \mathbb{E} \left[\left. \frac{1}{2} \|\widehat{\boldsymbol{\theta}} -\boldsymbol{\phi}\|_2^2  \right| \boldsymbol{\psi}_1, \mydots, \boldsymbol{\psi}_N \right] \right\}, \label{global_regression1}\\
    \hspace{1.2in} \boldsymbol{\theta}^*_j := \arg \min_{\widehat{\boldsymbol{\theta}}} \left\{ \mathbb{E} \left[\left. \frac{1}{2} \| \widehat{\boldsymbol{\theta}}-\boldsymbol{\phi}_j \|_2^2  \right| \{\boldsymbol{\psi}_{i} : i \in  [N]  \setminus j \}, \hat{\boldsymbol{\phi}}_j \right] \right\},\tag{Local Bayes obj.}\label{local_regression2}
\end{gather}
where \eqref{global_regression1} corresponds to the server's Bayes objective, given all clients' updates, while \eqref{local_regression2} is the client's Bayes objective, given its non-private estimation as well as other clients' private updates. From now on, we refer to our solution and algorithm as \textsc{FedHDP} in the remainder of the paper. The algorithm's pseudocode for federated linear regression is described in Algorithm \ref{FedHDP_FLR_algorithm}. For the federated linear regression setup, the server's goal is to find the following:
\begin{align}
    \widehat{\boldsymbol{\theta}}^* := \arg \min_{\widehat{\boldsymbol{\theta}}} \left\{   \frac{1}{2} \left\|\sum_{i \in [N]} w_i \boldsymbol{\psi}_i-\widehat{\boldsymbol{\theta}} \right\|_2^2 \right\}. \label{global_objective_opt2}
\end{align}
However, as for the clients in the considered federated setup, they don't have access to individual updates from other clients, but rather have the global estimate $\widehat{\boldsymbol{\theta}}^*$. So, we have the local \textsc{FedHDP} objective as
\begin{align}
    \hspace{1.2in} \widehat{\boldsymbol{\theta}}^*_j := \arg \min_{\widehat{\boldsymbol{\theta}}} \left\{ \frac{1}{2} \|\widehat{\boldsymbol{\theta}} - \widehat{\boldsymbol{\phi}}_j \|_2^2  +  \frac{\lambda}{2} \|\widehat{\boldsymbol{\theta}}-\widehat{\boldsymbol{\theta}}^*\|_2^2 \right\}, \tag{Local FedHDP obj.}\label{local_objective_opt_2}
\end{align}
where $\lambda_j$ is a regularization parameter that controls the closeness of the personalized model towards the global model. Higher values of $\lambda_j$ steer the personalized model to the global model, while smaller values of $\lambda_j$ steer the personalized model towards the local model at the client. Notice that~\eqref{local_objective_opt_2} is a special case of the \eqref{local_bayes_objective_general} where personalization is performed through a bi-level regularization. Refer to Appendix \ref{LR_setup} for the full description of all variables and parameters considered in the federated linear regression setup. Next, we provide a brief discussion on the optimality of the \textsc{FedHDP} and state its convergence in this setup to the Bayes optimal solution for the server as well as the clients.

\begin{algorithm*}[th!]
    \caption{\textsc{FedHDP:} Federated Learning with Heterogeneous Differential Privacy (Linear Regression)}\vspace{-0.2in}
    \begin{multicols}{2}
    \begin{algorithmic}\label{FedHDP_FLR_algorithm}
        \STATE \textit{Inputs:} $\boldsymbol{\theta}^0$, $\sigma_c^2$, $\gamma^2$, $\eta=1$, $\{\lambda_j\}_{j \in [N]}$, $r$, $\rho_{\text{np}}$, $N$.
        \STATE \textit{Outputs:} $\boldsymbol{\theta}^*, \{\boldsymbol{\theta}_j^*\}_{j \in [N]}$
        \STATE \textbf{At server:}
        \FOR{client $c_j$ in $\mathcal{C}^t$ \textbf{in parallel}}
        \STATE $\boldsymbol{\psi}_j \leftarrow$ \textit{ClientUpdate}($\theta^t, c_j$)
        \ENDFOR
        \STATE $\boldsymbol{\theta}^* \leftarrow \frac{1}{\rho_{\text{np}} N+r(1-\rho_{\text{np}}) N} \sum_{c_i \in \mathcal{C}_{\text{np}}} \boldsymbol{\psi}_i$
        \STATE \hspace{0.33in} $+\frac{r}{\rho_{\text{np}} N+r(1-\rho_{\text{np}}) N} \sum_{c_i \in \mathcal{C}_{\text{p}}} \boldsymbol{\psi}_i$
        \columnbreak
        \STATE \textbf{At client $c_j$:}
        \STATE \textit{ClientUpdate}($\boldsymbol{\theta}^0, c_j$):
        \STATE $\boldsymbol{\theta} \leftarrow \boldsymbol{\theta}^0$
        \STATE $\boldsymbol{\theta}_j \leftarrow \boldsymbol{\theta}^0$
        \STATE $\boldsymbol{\theta} \leftarrow \boldsymbol{\theta} - \eta (\boldsymbol{\theta}-\frac{1}{n_s}\sum_{i} \bx_{j,i} )$
        \STATE $\boldsymbol{\theta}_j^* \leftarrow \boldsymbol{\theta}_j - \eta_j \big( (\boldsymbol{\theta}_j-\frac{1}{n_s}\sum_{i} \bx_{j,i})+\lambda_j (\boldsymbol{\theta}_j-\boldsymbol{\theta}) \big)$
        \STATE $\boldsymbol{\psi} \leftarrow \boldsymbol{\theta} + \mathbbm{1}_{c_j \in \mathcal{C}_{\text{p}}} \mathcal{N}(0,(1-\rho_{\text{np}}) N\gamma^2)$
        \STATE return $\boldsymbol{\psi}$ to server
    \end{algorithmic}
    \end{multicols}
\end{algorithm*}

\subsubsection{Federated Linear Regression with Private and Non-Private Clients}\label{FLR_opt_out}
In our discussion so far, we assumed that clients have multiple privacy levels to choose from. In realistic setups, clients are expected to be individuals who may not have complete awareness of what each parameter means in terms of their privacy. In fact, interpreting the meaning of $\epsilon$ in DP is hard in general \cite{dwork2019differential}, and thus it is unrealistic to assume the average client can make their choice precisely. Therefore, the server needs to make a choice on how these parameters are presented to clients. A special case we consider extensively in this paper is the case with two classes of \emph{private} and \emph{non-private} clients, with $\rho_{\text{np}}$ fraction of the clients being non-private. Clients who choose to be private, denoted by the subset $\mathcal{C}_{\text{p}}$, are guaranteed a fixed $(\epsilon, \delta)$-DP, while clients who choose otherwise, denoted by the subset $\mathcal{C}_{\text{np}}$, are not private. In fact, a practical approach to privacy from the server's point of view is to enable privacy by default for all clients and give each client the option to opt out of privacy if they desire to do so. The non-private choice can be suitable for different types of clients such as enthusiasts, beta testers, volunteers, and company employees, among others. Before we present our proposed solution, we will discuss two major baselines that we consider in the remainder of the paper:

\textbf{\textsc{DP-FedAvg}} uses the differentially private federated averaging \cite{andrew2019differentially} as the global model, where all clients are guaranteed the same level of privacy regardless of their preference. %

\textbf{\textsc{HDP-FedAvg}} ensures privacy for the set of private clients similar to \textsc{DP-FedAvg}, however, no noise is added to the non-private clients' updates. %
By design, \textsc{HDP-FedAvg} is a stronger variant of \textsc{DP-FedAvg} for a more fair comparison in a heterogeneous privacy setting. %

\textbf{\textsc{FedHDP}} is the algorithm we propose in this paper. There are three important parameters introduced in the algorithm: $r$ (the ratio of the weight assigned to private clients to that of the non-private ones server-side), $\gamma^2$ (the variance of the noise added for privacy), and $\lambda_j$ described earlier. Furthermore, utilizing the assumption of a diagonal covariance matrix, which was also made by \cite{li2021ditto}, denote the noisy observations of the vector $\boldsymbol{\phi}_j$ at client $c_j$ by $\bx_{j,i}$ for $i=1,\mydots,n_s$. In the remainder of this section, we briefly analyze Algorithm \ref{FedHDP_FLR_algorithm}, and discuss its Bayes optimality for the federated linear regression problem for a specific privacy parameter $\gamma^2$.

\subsubsection{Optimal Global Estimate on the Server}\label{FLR_global}
The server's goal is to combine the updates received from clients such that the resulting noise variance is minimized, while ensuring the privacy of the set of private clients. To this end, we build on and extend the approach presented by \cite{li2021ditto} to find the optimal aggregator at the server. The server first computes the two intermediate average values for non-private and private clients such that they achieve the desired levels of privacy for each group. Now, the server aims to combine such values to compute its estimation $\theta$ of the value of $\phi$ with the goal of minimizing the resulting estimation noise covariance. Lemma \ref{lemma2_ditto} in Appendix \ref{LR_global} states the optimal aggregator, in this case, is the weighted average of such updates. If the resulting weighted average is expanded, it can be expressed as $\sum_{i = [N]} w_i \psi_i$. In this case, considering the weights used in the weighted average, let us denote the ratio of weights $w_i$'s dedicated for private clients to weights for non-private clients by $r=\frac{w_\text{p}}{w_{\text{np}}}$. The aim here is to put more weight for the non-private clients compared to the private clients, so that the information provided by them is utilized more heavily in the global estimate. Lemma \ref{opt_global_estimate} in Appendix \ref{LR_global} states the Bayes optimality of the proposed solution and derives the optimal value of $r^*$ for the considered setup when substituting the corresponding values of such parameters.

\subsubsection{Optimal Local Estimates on Clients}\label{FLR_personalized}
Now that we have stated the optimal server-side aggregation, we consider optimal learning strategies on the client side. In particular, we find that \textsc{FedHDP} achieves the Bayes optimal solution \eqref{eq:local-bayes} for local estimates at both the private as well as the non-private clients when using the optimal value of the ratio hyperparameter $r^*$ as well as the regularization parameters $\lambda_j$'s for each client. Lemma \ref{opt_personalized_estimate} in Appendix \ref{LR_local} states the Bayes optimality of this solution and states the optimal values of each of the regularization parameters $\lambda_j^*$. It is worth mentioning that this result shows that \textsc{FedHDP} recovers the~\eqref{eq:local-bayes}, which is the best one could hope for even if the client had access to DP versions of all other clients' updates without any constraints that arise in FL, such as just having access to the global model. We also notice that the values of $\lambda^*$ are different for private and non-private clients. We recall that during the computation of the expressions for the personalization parameters for all clients we considered the presence of data heterogeneity as well as privacy heterogeneity. In Table \ref{special_cases_lambda_LR} we provide a few important special cases for the \textsc{FedHDP} algorithm for the considered federated linear regression problem.

\begin{table}[ht!]
\captionsetup{font=small}
	\caption{Special cases of \textsc{FedHDP} in the federated linear regression.}
	\vspace{-.15in}
	\label{special_cases_lambda_LR}
	\begin{center}
		\footnotesize
		\begin{tabular}{|l|ll|ll|}
            \hline
            \rowcolor{Gray} & \multicolumn{2}{c|}{\textbf{No privacy %
            }} & \multicolumn{2}{c|}{\textbf{Homogeneous privacy %
            }} \\ \hline
			\cellcolor{Gray} \textbf{Homogeneous data %
            }   & \textsc{FedAvg}~~~ &  %
            & \textsc{DP-FedAvg+Ditto}~~~ & %
            \\ \hline
			\cellcolor{Gray} \textbf{Heterogeneous data %
            } &  \textsc{FedAvg+Ditto}~~~ & %
            & \textsc{DP-FedAvg+Ditto}~~~ & %
            \\ \hline
		\end{tabular}
	\end{center}
 \vspace{-.1in}
\end{table}

Finally, we state the optimality of the \textsc{FedHDP} solution for both estimates, i.e., local and global, using the appropriate learning rate in Theorem \ref{FedHDP_optimality} in Appendix \ref{convergence_LR}.

\textbf{Remark:} Although the heterogeneous DP problem is fundamentally different from robustness to data-poisoning attacks \cite{li2021ditto}, its solution bears resemblance to a recently-proposed personalization scheme known as \textsc{Ditto} \cite{li2021ditto}. \textsc{FedHDP}, in its general form, differs from \textsc{Ditto} in a number of major ways, and recovers it as a special case. First, the server-side aggregation in \textsc{Ditto} is the vanilla \textsc{FedAvg}; however, in the proposed solution the server-side aggregation is no longer \textsc{FedAvg}, but rather a new aggregation rule which utilizes the privacy choices made by clients. Second, \textsc{Ditto}, where the setup includes two sets of clients, i.e., benign and malicious, is designed for robustness against malicious clients; hence, the performance on malicious clients is not considered. On the other hand, the sets of clients in our proposed setup are the sets of clients with different levels of privacy; hence, measuring the performance across all sets of clients, i.e., clients with different privacy levels, is needed, and improving their performance is desired across all sets of clients. Third, the server in \textsc{Ditto} is unaware of the status of the clients, i.e., whether or not they are malicious; while in the proposed setup the server is aware of the privacy choices made by clients, and hence can give different weights to updates from private and non-private clients.

\subsubsection{Privacy-Utility Trade-off}\label{FLR_tradeoff}
Thus far, we have shown that for the problem of federated linear regression, the global estimate can benefit from the introduced setup of heterogeneous DP. A better global estimate would enable better performance on clients' devices in the federated linear regression setup, even when no personalization is utilized. However, a question may arise on whether clients have a utility cost if they choose to remain private compared to the case where they opt out and become non-private.

\begin{figure}
	\captionsetup{font=small}
	\centering
	\includegraphics[width=0.45\textwidth]{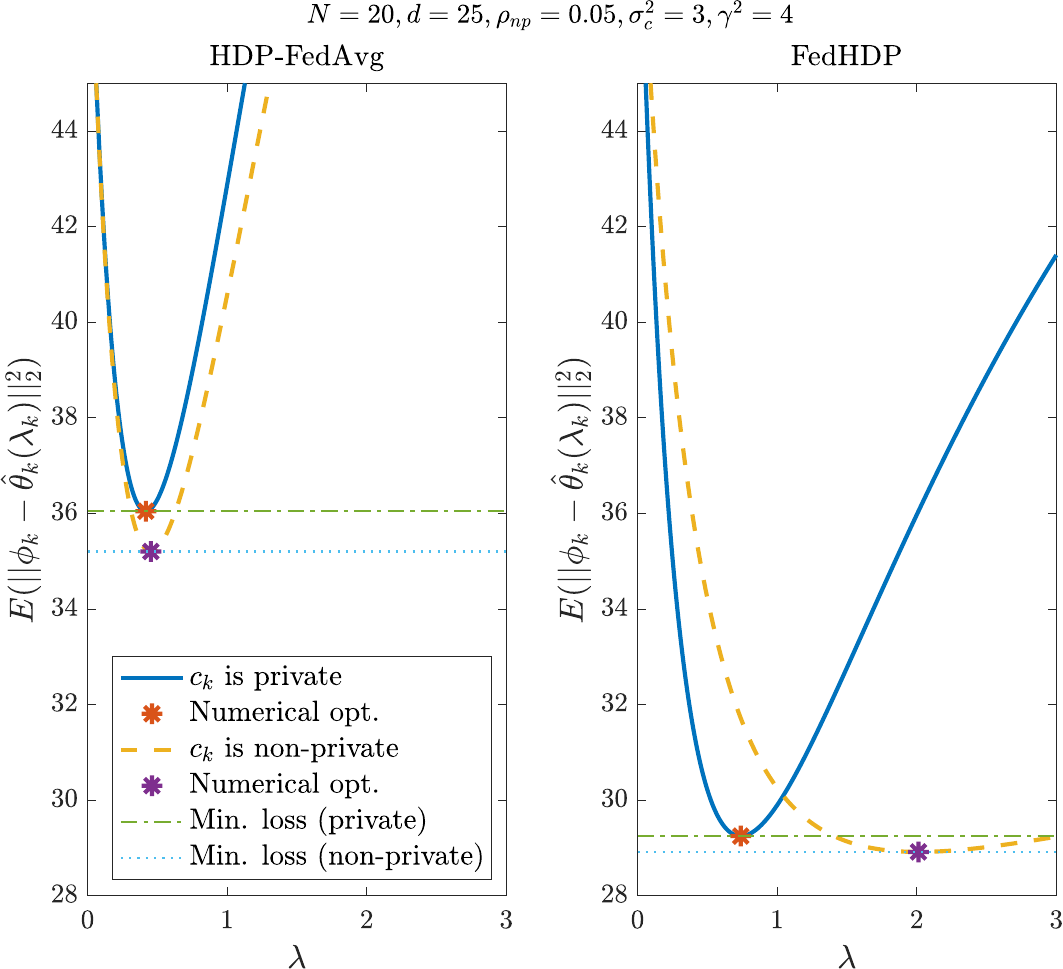} 
	\caption{The effect of opting out on the personalized local model estimate for a linear regression problem as a function of $\lambda$ when employing (left) \textsc{HDP-FedAvg+Ditto} and (right) \textsc{FedHDP}. Note that $\sigma_c^2$ includes the non-IID parameter, and $\gamma^2$ is the variance of the privacy noise.}
	\label{gain_opting_out_LR2}
\end{figure}

To answer this question, we argue that opting out helps the server produce a better global estimate, in addition to helping clients produce better personalized local estimates. In other words, clients that opt out can produce better personalized local estimates compared to the clients that remain private.

To illustrate the \emph{motivation} of opting out for clients, we perform an experiment where we conduct the federated linear regression experiment for two scenarios. The first is the case where client $c_k$ remains private, and the second is the case where $c_k$ opts out of privacy and becomes non-private. For comparison, we provide the results of the experiments of \textsc{FedHDP} with the optimal value $r^*$, as well as \textsc{HDP-FedAvg+Ditto}. The results of these experiments are shown in Figure \ref{gain_opting_out_LR2}.%
We can see that if the client is non-private, they exhibit improvements in their estimates using the optimal value $\lambda^*$ for both algorithms, but the proposed \textsc{FedHDP} with the optimal value $r^*$ greatly outperforms \textsc{HDP-FedAvg+Ditto}. Additionally, in this problem, we can see that the optimal value of $\lambda_{\text{np}}^*$ for non-private clients is always greater than or equal to the value $\lambda_{\text{p}}^*$ for private clients, which is due to the value of $r$ being less than or equal to $1$. In other words, non-private clients have more influence on the global estimate, and hence, encouraging the local estimate to get closer to the global estimate is more meaningful compared to private clients. Furthermore, this experiment illustrates an important trade-off between privacy and utility for each client, where opting out of privacy improves performance, while maintaining privacy incurs degraded performance.

\subsection{Extension to Federated Linear Regression with Multiple Privacy Levels}
We presented brief results of the analysis for the federated linear regression with two privacy levels, i.e., private and non-private, as a first step towards demonstrating the effectiveness of \textsc{FedHDP}. We present the complete analysis of the federated linear regression and extend such analysis to the following setups:
\begin{itemize}
\itemsep0em
    \item {\bf Federated linear regression with two privacy levels:} In this extended setup, we have two subsets of clients $\mathcal{C}_1$ and $\mathcal{C}_2$, each having its own privacy requirements $\gamma^2_1$ and $\gamma^2_2$, respectively. We perform an analysis of the new setup, and derive the expressions for the optimal hyperparameters under a diagonal covariance matrix assumption. You can refer to Appendix \ref{LR_appendix} for more. Additionally, in Appendix \ref{fpe_appendix}, we provide a summarized analysis of the federated point estimation where we highlight some insightful observations.
    \item {\bf Federated linear regression beyond two privacy levels:} In Appendix \ref{app:more-than-two}, we consider the case with more than two privacy levels in federated linear regression, provide an analysis of this setup, and show the solution to the optimal aggregator as well as the optimal regularization parameters for each set of clients. The solution would still be achieved by \textsc{FedHDP} algorithm, however, now with different values of $\lambda_j$ for each set of clients, and a server-side weighted averaging that depends on the individual values of noise variance $\gamma^2_i$ of each set of clients.
\end{itemize}

\section{\textsc{FedHDP}: Federated Learning with Heterogeneous Differential Privacy }\label{FedHDP}
Now that we have been able to find a Bayes optimal solution in the simplified federated linear regression setup (and some extensions of it), we build upon the ingredients we used to build a general solution for FL with heterogeneous DP. 
We formally present the \textsc{FedHDP} algorithm and elaborate on its hyperparameters. The \textsc{FedHDP} algorithm that is designed to take advantage of the aforementioned heterogeneous privacy setup is described in Algorithm \ref{FedHDP_algorithm}. Similarly to the simplified setting, \textsc{FedHDP} utilizes DP with adaptive clipping, upweighing of less private clients on the server side, and a simple form of personalization.

\begin{algorithm*}[tbh]
\centering
\small
    \caption{\textsc{FedHDP}: Federated learning with heterogeneous DP (General Algorithm)}\vspace{-0.2in}
    \begin{multicols}{2}
    \begin{algorithmic}\label{FedHDP_algorithm}
        \STATE \textit{Inputs:} model parameters $\boldsymbol{\theta}^0$, sensitivity $S^0$, learning rate $\eta$, personalized learning rate $\eta_p$, noise multipliers $\boldsymbol{z}, z_b$, quantile $\kappa$, and factor $\eta_b$.
        \STATE \textit{Outputs:} $\boldsymbol{\theta}^{T}, \{\boldsymbol{\theta_j}\}_{j \in [N]}$
        \STATE \textbf{At server:}
        \FOR{$t=0,1,2,...,T-1$}
        \STATE $\mathcal{C}^t \leftarrow$ Sample $N^t$ clients from $\mathcal{C}$
        \FOR{client $c_j$ in $\mathcal{C}^t$ \textbf{in parallel}}
        \STATE $\bigtriangleup \boldsymbol{\theta}^{t}_j, b^t_j \leftarrow$ \textit{ClientUpdate}($\boldsymbol{\theta}^t, c_j, S^t$)
        \ENDFOR
        \FOR{$j \in [l]$ \textbf{in parallel}}
        \STATE $N_{j}^t \leftarrow |\mathcal{C}_{j}^t|$, \hspace{0.05in} $z_j^t \leftarrow z_j \frac{S^t}{N_{j}^t}$
        \STATE $\bigtriangleup\tilde{\boldsymbol{\theta}}^{t}_j \leftarrow \frac{1}{N_{j}^t} \sum_{c_i \in \mathcal{C}_{j}^t} \bigtriangleup \boldsymbol{\theta}^{t}_i+ \mathcal{N}(\textbf{0},(z_j^t)^2 \textbf{I})$
        \ENDFOR
        \STATE $\bigtriangleup\boldsymbol{\theta}^{t} \leftarrow \sum_{i \in [l]} w_i^t \bigtriangleup\tilde{\boldsymbol{\theta}}^{t}_i$
        \STATE $\boldsymbol{\theta}^{t+1} \leftarrow \boldsymbol{\theta}^{t} +\bigtriangleup\boldsymbol{\theta}^{t}$ 
        \STATE $S^{t+1} \leftarrow S^t e^{-\eta_b \big((\frac{1}{N^t} \sum_{i \in \mathcal{C}^t} b_i^t +\mathcal{N}(0,z_b^2\frac{1}{N^t}^2)) - \kappa \big)}$
        \ENDFOR
        \vfill\null
        \columnbreak
        \STATE \textbf{At client $c_j$:}
        \STATE \textit{ClientUpdate}($\boldsymbol{\theta}^0, c_j, S$):
        \STATE $\boldsymbol{\theta} \leftarrow \boldsymbol{\theta}^0$
        \STATE $\boldsymbol{\theta}_j \leftarrow \boldsymbol{\theta}^0$ (if not initialized)
        \STATE $\mathcal{B} \leftarrow $ batch the client's data $\bD_j$
        \FOR{epoch $e=1,2,...,E$}
        \FOR{$B$ in $\mathcal{B}$}
        \STATE $\boldsymbol{\theta} \leftarrow \boldsymbol{\theta} - \eta \nabla f_j(\boldsymbol{\theta}, B)$
        \STATE $\boldsymbol{\theta}_j \leftarrow \boldsymbol{\theta}_j - \eta_p (\nabla f_j(\boldsymbol{\theta}_j, B) + \lambda_j (\boldsymbol{\theta}_j-\boldsymbol{\theta}^0))$  %
        \ENDFOR
        \ENDFOR
        \STATE $\bigtriangleup \boldsymbol{\theta} \leftarrow \boldsymbol{\theta}-\boldsymbol{\theta}^0$
        \STATE $b \leftarrow \mathbbm{1}_{\Vert \bigtriangleup \boldsymbol{\theta} \Vert_2 \leq S}$
        \STATE return Clip($\bigtriangleup \boldsymbol{\theta}, S), b$ to server
        \STATE
        \STATE
        \STATE
        \STATE \textit{Clip}($\boldsymbol{\theta}, S$):
        \STATE $\quad$ return $\boldsymbol{\theta} \times \frac{S}{\max(\Vert \boldsymbol{\theta} \Vert_2 , S)}$ to client
    \end{algorithmic}
    \end{multicols}
    \vspace{-0.25in}
\end{algorithm*}

First, the notations for the variables used in the algorithm are introduced. The set of $N$ clients $\mathcal{C}$ is split into subsets containing clients grouped according to their desired privacy levels, denoted by $\mathcal{C}_1, \mathcal{C}_2, \mydots, \mathcal{C}_l$. Let the number of clients in the subset $\mathcal{C}_i$ be denoted by $N_i=|\mathcal{C}_i|$. The rest of the hyperparameters in the algorithm are as follows: the noise multipliers $\bz, z_b$, the clipping sensitivity $S$, the learning rate at clients $\eta$, the personalized learning rate at clients $\eta_p$, quantile $\kappa$, and factor $\eta_b$. Also, the superscript $(\cdot)^t$ is used to denote a parameter during the $t$-th training round.

During round $t$ of training, no additional steps are required for the clients during the global model training. Clients train the received model using their local data followed by sending back their clipped updates $\Delta \boldsymbol{\theta}_j^t$ along with their clipping indicator $b_j^t$ to the server. The server collects the updates from clients and performs a \emph{two-step aggregation process}. During the first step, the updates from the clients in each subset $\mathcal{C}_i$ are passed through a $(\epsilon_i, \delta_i)$ DP averaging function to produce $\bigtriangleup\boldsymbol{\tilde{\theta}}^{t}_i$. In the second step of the aggregation, the outputs of the previous averaging functions are combined to produce the next iteration of the model. In this step, the server performs a weighted average of the outputs. The weights for this step are chosen based on the number of clients in each subset in that round, the privacy levels, as well as other parameters. This part resembles the weighted averaging considered in the aforementioned federated linear regression problem in Section \ref{FLR_global}. In general, the goal is to give more weight to updates from clients with less strict privacy requirements compared to the ones with stricter privacy requirements. The output of this step is $\bigtriangleup\boldsymbol{\theta}^{t}$, which is then added to the previous model state to produce the next model state.

To further elaborate on the averaging weights, let us reconsider the simple setup where we have only two subsets of clients, i.e., $\mathcal{C}_1$ and $\mathcal{C}_2$, with DP parameters $(\epsilon_1,\delta_1)$ and $(\epsilon_2,\delta_2)$, respectively. Also, suppose that the second subset has stricter privacy requirements, i.e., $\epsilon_1 \geq \epsilon_2$ and $\delta_1 \geq \delta_2$ The weights $w_1^t$ and $w_2^t$ during round $t$ can be expressed as follows $w_1^t=\frac{N_1^t}{N_1^t + rN_2^t}$ and $w_2^t=\frac{r N_2^t}{N_1^t + rN_2^t}$. In general, we desire the value of the ratio $r$ to be bounded as $0 \leq r \leq 1$ in \textsc{FedHDP} to use the less-private clients' updates more meaningfully. The first factor to consider when choosing $r$ is related to the desired privacy budget, lower privacy budgets require more noise to be added, leading to a lower value of $r$. This intuition was verified in the simplified setting in the previous section. Another factor that is more difficult to quantify is the heterogeneity between the less-private set of clients and the private set of clients. To illustrate this intuition we give the following example. Suppose that the model is being trained on the MNIST dataset where each client has samples of only one digit. Consider two different scenarios: the first is when each of the less-private clients has a digit drawn uniformly from all digits, and the second is when all of the less-private clients have the same digit. It can be argued that the ratio $r$, when every other hyperparameter is fixed, should be higher in the second scenario compared to the first; since contributions from the more-private clients are more significant to the overall model in the second scenario than the first. This will be experimentally verified in the experiments section presented later.

Then, clients need to train personalized models to be used locally. In the \textit{personalization process}, each client simultaneously continues learning a local model when participating in a training round using the local dataset and the most recent version of the global model received during training and the appropriate value of $\lambda$. It is worth noting that the personalization step is similar in spirit to the personalized solution to the federated linear regression problem in Section \ref{FLR_personalized}. Furthermore, the server keeps track of the privacy loss due to the clients' participation in each round by utilizing the moment accountant method \cite{abadi2016deep} for each set of clients to provide them with tighter bounds on their privacy loss.

\section{Experimental Evaluation}\label{experiments}
Thus far, we showed that \textsc{FedHDP} achieves Bayes optimal performance on a class of linear problems. 
In this section, we present the results of a number of more realistic experiments to show the utility gain of the proposed \textsc{FedHDP} algorithm with fine-tuned hyperparameters compared to the baseline \textsc{DP-FedAvg} algorithm, where we additionally apply personalization on this baseline at clients using Ditto~\cite{li2021ditto} and refer to it as \textsc{DP-FedAvg+Ditto}.
Additionally, we compare the performance against another stronger baseline, \textsc{HDP-FedAvg+Ditto}, which is a personalized variant of \textsc{HDP-FedAvg}.
Note that \textsc{HDP-FedAvg+Ditto} can be viewed as a special case of \textsc{FedHDP} with uniform averaging, i.e., $r=1$, instead of a weighted averaging at the server. Note that the hyperparameters for the models, such as learning rate and others, are first tuned using the baseline \textsc{Non-Private} algorithm, then the same values of such hyperparameters are applied in all the private algorithms. The experiments consider the case where two privacy levels are presented to each client to choose from, to be private or non-private. The experiments show that \textsc{FedHDP} outperforms the baseline algorithms with the right choice of the hyperparameters $r, \lambda$ in terms of the global model accuracy, as well as in terms of the average personalized local model accuracy.

\subsection{Setup}
The experiments are conducted on multiple federated datasets, synthetic and realistic. The synthetic datasets are manually created to simulate extreme cases of data heterogeneity often exhibited in FL scenarios. The realistic federated datasets are from TFF \cite{TFF}, where such datasets are assigned to clients according to some criteria. The synthetic dataset is referred to as the non-IID MNIST dataset, and the number of samples at a client is fixed across all clients. Each client is assigned samples randomly from the subsets of samples each with a single digit between $0\!-\!9$. A skewed version of the synthetic dataset is one where non-private clients are sampled from the clients who have the digit $7$ in their data. In the non-IID MNIST dataset, we have $2,000$ clients and we randomly sample $5\%$ of them for training each round. The realistic federated datasets are the FMNIST and FEMNIST from TFF datasets. The FMNIST and FEMNIST datasets contain $3,383$ and $3,400$ clients, respectively, and we sample $\sim 3\%$ of them for training each round. TensorFlow Privacy (TFP) \cite{TFP} is used to compute the privacy loss, i.e., the values of $(\epsilon,\delta)$, incurred during the training phase. Refer to the appendix for an extended description of the used models and their parameters, as well as an extended version of the results.

\textbf{Remark:}  It is worth noting that computing the optimal values of $r$, $\lambda_{\text{np}}$, and $\lambda_{\text{p}}$ for non-convex models such as neural networks is not straightforward. To resolve this issue in practice, we treat them as hyperparameters to be tuned via grid search on the validation set. Note that tuning the regularization parameters $\lambda_{\text{np}}$ and $\lambda_{\text{p}}$ is done locally at clients and, hence, does not lead to any privacy loss. On the other hand, the ratio hyperparameter $r$ needs tuning by the server. The recent work by \cite{papernot2021hyperparameter} shows that tuning hyperparameters from non-private training runs incurs significant privacy loss while such a hyperparameter tuning based on private runs may lead to manageable privacy loss. We must state however that we assumed, as many works in the literature do, that tuning of such hyperparameter does not incur any additional privacy loss on clients. We acknowledge that we have not investigated this privacy loss and leave a comprehensive investigation of this issue to future work.

\subsection{Results}
In this part, we provide the outcomes of the experiments on the datasets mentioned above. In these experiments, we provide results when $5\%$ of the total client population is non-private. Clients that opt out are picked randomly from the set of all clients but fixed for a fair comparison across all experiments. The exception for this assumption is for the skewed non-IID MNIST dataset, where clients that opt out are sampled from the clients who have the digit $7$. All other hyperparameters are fixed. To evaluate the performance of each algorithm, we measure the following quantities for each dataset:

\begin{table*}[t!]
\captionsetup{font=small}
\caption{Summary of the results of experiments on \emph{synthetic datasets}: We compare the performance of the baseline algorithms against \textsc{FedHDP} with tuned hyperparameters. The variance of the performance metric across clients is between parenthesis.}
\vspace{-0.15in}
\label{results}
\begin{center}
\resizebox{\linewidth}{!}{%
\begin{tabular}{|c|c||c||c|c|c||c|c|c|}
\hline
\rowcolor{Gray} \multicolumn{9}{|c|}{nonIID MNIST dataset, $(3.6,10^{-4})$-DP} \\ \hline
\multicolumn{2}{|c||}{Setup} & \multicolumn{4}{|c||}{Global model} & \multicolumn{3}{|c|}{Personalized local models} \\ \hline
\textbf{Algorithm} & \textbf{hyperparameters} &$Acc_{g}\%$ &$Acc_{\text{g},\text{p}}\%$ & $Acc_{\text{g},\text{np}}\%$& $\bigtriangleup_{\text{g}}\%$ &$Acc_{\text{l},\text{p}}\%$ & $Acc_{\text{l},\text{np}}\%$& $\bigtriangleup_{\text{l}}\%$ \\ \hline
\textsc{Non-Private+Ditto} & $\lambda_{\text{np}} \!=\!0.005$ &$93.8$ & - &$93.75 (0.13)$&- &-&$99.98 (0.001)$ &- \\ \hline
\textsc{DP-FedAvg+Ditto} & $\lambda_{\text{p}} \!=\!0.005$ &$88.75$ &$88.64 (0.39)$ & - & - &$99.97 (0.002)$ &- &- \\ \hline
\textsc{HDP-FedAvg+Ditto} & $ \lambda_{\text{p}} \!=\! \lambda_{\text{np}} \!=\!0.005$ &$87.71$ &$87.55(0.42)$ &$88.35(0.34)$&$0.8$ &$99.97(0.001)$&$99.93(0.001)$ &$-0.04$ \\ \hline
\textsc{FedHDP} &\specialcell{$r\!=\!0.01,$ \\ $ \lambda_{\text{p}} \!=\! \lambda_{\text{np}} \!=\!0.005$} & $92.48$ &$92.43(0.30)$ &$93.30(0.21)$&$0.88$ &$99.94(0.001)$&$99.94(0.001)$ &$0.0$\\ \hline

\rowcolor{Gray} \multicolumn{9}{|c|}{Skewed nonIID MNIST dataset, $(3.6,10^{-4})$-DP} \\ \hline
\textsc{Non-Private+Ditto} & $\lambda_{\text{np}} \!=\!0.005$ &$93.67$ & - &$93.62 (0.15)$&- &-&$99.98 (0.001)$ &- \\ \hline
\textsc{DP-FedAvg+Ditto} & $\lambda_{\text{p}} \!=\! 0.005$ &$88.93$ &$88.87 (0.35)$ & - & - &$99.98 (0.001)$ &- &- \\ \hline
\textsc{HDP-FedAvg+Ditto} & $ \lambda_{\text{p}} \!=\! \lambda_{\text{np}} \!=\!0.005$ &$88.25$ &$88.05(0.39)$ &$89.98(0.05)$&$1.93$ &$99.97(0.001)$&$99.85(0.001)$ &$-0.11$ \\ \hline
\textsc{FedHDP} & \specialcell{$r\!=\!0.1,$ \\ $ \lambda_{\text{p}} \!=\! \lambda_{\text{np}} \!=\!0.005$} &$90.36$ &$89.96(0.37)$ &$97.45(0.01)$&$7.49$ &$99.97(0.001)$&$99.76(0.003)$ &$-0.21$  \\ \hline
\textsc{FedHDP} & \specialcell{$r\!=\!0.9,$ \\ $ \lambda_{\text{p}} \!=\! \lambda_{\text{np}} \!=\!0.005$} &$87.96$ &$87.69(0.56)$ &$92.97(0.04)$&$5.28$ &$99.98(0.001)$&$99.96(0.001)$ &$-0.02$  \\ \hline
\end{tabular}
}
\end{center}
\vspace{-.1in}
\end{table*}

\begin{table*}[t!]
\captionsetup{font=small}
\caption{Summarized results of experiments on \emph{realistic federated datasets}:  We compare the performance of the baseline algorithms against \textsc{FedHDP} with tuned hyperparameters. The variance of the performance metric across clients is between parenthesis.}
\label{results2}
\vspace{-0.15in}
\begin{center}
\resizebox{\linewidth}{!}{%
\begin{tabular}{|c|c||c||c|c|c||c|c|c|}
\hline
\rowcolor{Gray} \multicolumn{9}{|c|}{FMNIST dataset, $(0.6,10^{-4})$-DP} \\ \hline
\multicolumn{2}{|c||}{Setup} & \multicolumn{4}{|c||}{Global model} & \multicolumn{3}{|c|}{Personalized local models} \\ \hline
\textbf{Algorithm} & \textbf{hyperparameters} &$Acc_{g}\%$ &$Acc_{\text{g},\text{p}}\%$ & $Acc_{\text{g},\text{np}}\%$& $\bigtriangleup_{\text{g}}\%$ &$Acc_{\text{l},\text{p}}\%$ & $Acc_{\text{l},\text{np}}\%$& $\bigtriangleup_{\text{l}}\%$ \\ \hline
\textsc{Non-Private+Ditto} & $\lambda_{\text{np}} \!=\!0.05$ &$89.65$ & - &$89.35 (1.68)$&- &-&$94.53 (0.59)$ &- \\ \hline
\textsc{DP-FedAvg+Ditto} & $\lambda_{\text{p}} \!=\!0.05$ &$77.61$ &$77.62(2.55)$ & - & - &$90.04 (1.04)$ &- &- \\ \hline
\textsc{HDP-FedAvg+Ditto} & $ \lambda_{\text{p}} \!=\! \lambda_{\text{np}} \!=\!0.005$ &$75.87$ &$75.77(2.84)$ &$74.41(2.8)$&$-1.36$ &$90.45(1.02)$&$92.32(0.8)$ &$1.87$\\ \hline
\textsc{FedHDP} & \specialcell{$r\!=\!0.01,$\\ $\lambda_{\text{p}} =0.05, \lambda_{\text{np}} =0.005$} &$86.88$&$85.36(1.89)$ &$90.02(1.28)$ &$4.66$ &$93.76(0.68)$&$95.94(0.41)$ &$2.18$  \\ \hline

\rowcolor{Gray} \multicolumn{9}{|c|}{FEMNIST dataset, $(4.1,10^{-4})$-DP} \\ \hline
\textsc{Non-Private+Ditto} & $ \lambda_{\text{np}} \!=\!0.25$ &$81.66$ & - &$81.79(1.38)$&- &-&$84.46(0.89)$ &- \\ \hline
\textsc{DP-FedAvg+Ditto} & $\lambda_{\text{p}} \!=\!0.05$ &$75.42$ &$75.86(1.82)$ & - & - &$74.69(1.29)$ &- &- \\ \hline
\textsc{HDP-FedAvg+Ditto} & $ \lambda_{\text{p}} \!=\! \lambda_{\text{np}} \!=\!0.05$ &$75.12$ &$75.87(1.65)$& $78.59(1.58)$& $2.72$& $74.67(1.34)$& $75.95(1.12)$& $1.28$ \\ \hline
\textsc{FedHDP} & \specialcell{$r\!=\!0.1,$\\ $\lambda_{\text{p}}= \lambda_{\text{np}} =0.05$} &$76.52$ &$77.91(1.67)$& $83.9(1.27)$& $5.99$& $77.9(1.22)$& $79.15(0.99)$& $1.25$ \\ \hline
\textsc{FedHDP} & \specialcell{$r\!=\!0.01,$\\ $\lambda_{\text{p}}=\lambda_{\text{np}} =0.25$} &$74.86$&$77.31(2.18)$& $86.73(0.98)$& $9.42$& $81.19(1.02)$& $84.68(0.78)$& $3.49$ \\ \hline

\end{tabular}
}
\end{center}
\vspace{-.15in}
\end{table*}

\begin{enumerate}
\itemsep0em
    \item $Acc_{\text{g}}$: the average test accuracy on the \emph{server} test dataset using the global model.
    \item $Acc_{\text{g},\text{p}}$, $Acc_{\text{g},\text{np}}$: the average test accuracy of all \emph{private} and \emph{non-private} clients using the global model on their local test datasets, respectively.
    \item $Acc_{\text{l},\text{p}}$, $Acc_{\text{l},\text{np}}$: the average test accuracy of all \emph{private} and \emph{non-private} clients using their personalized local models on their local test datasets, respectively.
    \item $\bigtriangleup_{\text{g}}$, $\bigtriangleup_{\text{l}}$: the gain in the average performance of \emph{non-private} clients over the \emph{private} ones using the global model and the personalized local models on their local test datasets, respectively; computed as $\bigtriangleup_{\text{g}}=Acc_{\text{g},\text{np}}-Acc_{\text{g},\text{p}}$ and $\bigtriangleup_{\text{l}}=Acc_{\text{l},\text{np}}-Acc_{\text{l},\text{p}}$.
\end{enumerate}

A summary of the results, shown in Table \ref{results} and Table \ref{results2}, provides the best performance for each experiment along with their corresponding hyperparameters. More detailed results are shown in the appendix. If different values of the hyperparameters in \textsc{FedHDP} yield two competing results, such as one with a better global model performance at the server and one with better personalized models at the clients, we show both.

We can see from Tables \ref{results} and \ref{results2} that \textsc{FedHDP} allows the server to learn better global models while allowing clients to learn better personalized local models compared to the other baselines, i.e., \textsc{DP-FedAvg+Ditto} as well as the \textsc{FedHDP} with $r=1$. For example, the gain due to \textsc{FedHDP} compared to the \textsc{DP-FedAvg+Ditto} in terms of global model performance is up to $9.27\%$. For personalized local models, the gain for clients due to \textsc{FedHDP} compared to \textsc{DP-FedAvg+Ditto} is up to $9.99\%$. Additionally, we can also see the cost in the average performance in personalized local models between clients who choose to opt out of privacy and clients who choose to remain private. This demonstrates the advantage of opting out, which provides clients with an incentive to opt out of DP to improve their personalized local models, for example, non-private clients can gain up to $3.49\%$ on average in terms of personalized local model performance compared to private clients. It is worth mentioning that opting out can also improve the global model's performance on clients' local data. We observe that there is up to $12.4\%$ gain in the average performance of non-private clients in terms of the accuracy of the global model on the local data compared to the one of baseline \textsc{DP-FedAvg+Ditto}. Similar trends can be observed for the other baseline.

\section{Conclusion}\label{conclusion}
In this paper, we considered a new aspect of heterogeneity in FL setups, namely, heterogeneous privacy. We proposed a new setup for privacy heterogeneity between clients where privacy levels are no longer fixed for all clients. In this setup, the clients choose their desired privacy levels according to their preferences and inform the server about the choice.
We provided a formal treatment for the federated linear regression problem and showed the optimality of the proposed solution on the central server as well as the personalized local models in such a setup. Moreover, we have observed that personalization becomes necessary whenever data heterogeneity is present, privacy is required, or both.
We proposed a new algorithm called \textsc{FedHDP} for the considered setup. In \textsc{FedHDP}, the aim is to employ DP to ensure the privacy level desired by each client is met, and we proposed a two-step aggregation scheme at the server to improve the utility of the model. We also utilize personalization to improve the performance at clients.
Finally, we provided a set of experiments on synthetic and realistic federated datasets considering a heterogeneous DP setup. We showed that \textsc{FedHDP} outperforms the baseline private FL algorithm in terms of the global model as well as the personalized local models' performance, and showed the cost of requiring stricter privacy parameters in such scenarios in terms of the gap in the average performance at clients.

\section*{Acknowledgement}
The authors would like to thank Tian Li (Carnegie Mellon University) for helpful discussions and feedback that helped improve this paper.

\bibliographystyle{IEEEtran}
\bibliography{IEEEabrv}

\newpage
\appendix

\section*{Appendix: Organization \& Contents}
We include additional discussions and results complementing the main paper in the appendix.

\begin{itemize}
    \item Appendix~\ref{LR_appendix} includes a discussion on federated linear regression and the optimality of \textsc{FedHDP} along with some related simulation results.
    
    \item Appendix \ref{fpe_appendix} includes additional results for the federated point estimation problem.
    
    \item Finally, we include additional information about the experimental setup along with extended versions of the results of each experiment in Appendix \ref{results_app}.
    
    \item Finally, broader impact and limitations are presented in Appendix \ref{broader_impact}.
\end{itemize}

\section{Theoretical Derivations for Federated Linear Regression}\label{LR_appendix}
In this appendix, we consider applying the proposed algorithm \textsc{FedHDP} on the simplified setup of federated linear regression problem, first introduced by \cite{li2021ditto}. We formally present the setup, solve the optimization problem, and show the Bayes optimality of \textsc{FedHDP} on the global model on the server, as well as the optimality of local models on clients.

\subsection{Federated Linear Regression Setup}\label{LR_setup}
Assume the number of samples per client is fixed and the effect of clipping is negligible. Let us denote the total number of clients as $N$, of whom $N_{1} = \rho_{1} N$ are private with privacy level $p_1$ and $N_{2} = (1-\rho_1)N$ are private with privacy level $p_2$, where $p_2$ is stricter than $p_1$, i.e., more private. The subset of clients with privacy level $p_1$ is denoted by $\mathcal{C}_1$, while the subset of clients with privacy level $p_2$ is denoted by $\mathcal{C}_2$. Denote the number of samples held by each client as $m_s$, the samples at client $c_j$ as $\{\bF_j,\bx_j\}$, where $\bF_j$ is the data features matrix, and $\bx_j$ is the response vector. Let us assume that the features matrix $\bF_j$ has the property of diagonal covariate covariance, i.e., $\bF_j^T\bF_j = n_s I_d$. Let us denote the relationship between $\bx_j$ and $\bF_j$ as
\begin{align}
    \bx_j = \bF_j \boldsymbol{\phi}_j + \bv_j
\end{align}
where elements of the observations noise vector $\bv_j$ are drawn independently from $\mathcal{N}(0,\beta^2)$, and $\boldsymbol{\phi}_j$ is the vector of length $d$ to be estimated. The vector $\boldsymbol{\phi}_j$ is described as
\begin{align}
    \boldsymbol{\phi}_j = \boldsymbol{\phi} + \bp_j
\end{align}
where $\bp_j \sim \mathcal{N}(0, \tau^2 I_d)$, and $\boldsymbol{\phi}$ is the vector to be estimated at the server. It is worth noting that $\tau^2$ is a measure of relatedness, as specified by \cite{li2021ditto}, where larger values of $\tau^2$ reflect increasing data heterogeneity and vice versa. The local loss function at client $c_j$ is as follows
\begin{align}
    f_j(\boldsymbol{\phi})=\frac{1}{m_s} \|\bF_j \boldsymbol{\phi} -\bx_j\|_2^2
\end{align}
Local estimate of $\boldsymbol{\phi}_j$ at client $c_j$ that minimizes the loss function given $\bF_j$ and $\bx_j$ is denoted by $\widehat{\boldsymbol{\phi}}_j$ and is computed as 
\begin{align}
    \widehat{\boldsymbol{\phi}}_j = \big( \bF_j^T\bF_j \big)^{-1} \bF_j^T \bx_j,
\end{align}
which is distributed as $\widehat{\boldsymbol{\phi}}_j\sim \mathcal{N}\big(\boldsymbol{\phi}_j, \beta^2 (\bF_j^T\bF_j)^{-1}\big)$. As assumed earlier, let $\bF_j^T\bF_j = n_s I_d$, then the loss function can be translated to
\begin{align}
    f_j(\boldsymbol{\phi}) = \frac{1}{2}\big\|\boldsymbol{\phi}-\frac{1}{n_s}\sum_{i=1}^{n_s} \bx_{j,i}\big\|_2^2, \label{vector_estimation}
\end{align}
where $\bx_{j,i}$'s are the noisy observations of the vector $\boldsymbol{\phi}_j$ at client $c_j$, which holds due to the assumption of the diagonal covariate covariance matrix.
The updates sent to the server by client $c_j$ are as follows
\begin{align}
    \boldsymbol{\psi}_j = \widehat{\boldsymbol{\phi}}_j + \bl_j
\end{align}

where $\bl_j \sim \mathcal{N}(\boldzero, N_1 \gamma_1^2 I_d)$ for private clients $c_j \in \mathcal{C}_1$ or $\bl_j\sim \mathcal{N}(\boldzero, N_2 \gamma_2^2 I_d)$ for private clients $c_j \in \mathcal{C}_2$. Note that as we mentioned in the paper, we still consider client-level privacy; however, we move the noise addition process from the server side to the client side. This is done such that when the server aggregates the private clients’ updates in each subset the resulting privacy noise variance is equivalent to the desired value by the server, i.e., $\gamma_1^2$ and $\gamma_2^2$. This is done for simplicity and clarity of the discussion and proofs.

In this setup, the problem becomes a vector estimation problem and the goal at the server is to estimate the vector $\boldsymbol{\phi}$ given the updates from all clients, denoted by $\{\boldsymbol{\psi}_i : i\in [N]\}$ as 

\begin{align}
    \boldsymbol{\theta}^* := \arg \min_{\widehat{\boldsymbol{\theta}}} \left\{ \mathbb{E} \left[\left. \frac{1}{2} \|\widehat{\boldsymbol{\theta}} -\boldsymbol{\phi}\|_2^2  \right| \boldsymbol{\psi}_1, \mydots, \boldsymbol{\psi}_N \right] \right\}. \label{global_regression} %
\end{align}
On the other hand, client $c_j$'s goal is to estimate the vector $\boldsymbol{\phi}_j$ given their local estimate $\widehat{\boldsymbol{\phi}}_j$ as well as the updates from all other clients $\{\boldsymbol{\psi}_i : i\in [N]\setminus j\}$ as
\begin{align}
    \hspace{1.2in} \boldsymbol{\theta}^*_j := \arg \min_{\widehat{\boldsymbol{\theta}}} \left\{ \mathbb{E} \left[\left. \frac{1}{2} \| \widehat{\boldsymbol{\theta}}-\boldsymbol{\phi}_j \|_2^2  \right| \{\boldsymbol{\psi}_{i} : i \in  [N]  \setminus j \}, \hat{\boldsymbol{\phi}}_j \right] \right\}.\tag{Local Bayes obj.}\label{local_regression}
\end{align}

Now, considering the value of $\boldsymbol{\phi}$, the covariance matrix of client $c_j$'s update is denoted by $\Sigma_j$. The value of the covariance matrix can be expressed as follows
\begin{align}
    \Sigma_j = \left\{
    \begin{array}{@{}ll@{}}
    \beta^2 (\bF_j^T\bF_j)^{-1} + (\tau^2 +N_1 \gamma_1^2) I_d , & \text{if}\ c_j \in \mathcal{C}_1 \\
    \beta^2 (\bF_j^T\bF_j)^{-1} + (\tau^2 +N_2 \gamma_2^2) I_d , & \text{if}\ c_j \in \mathcal{C}_2
    \end{array}\right.
\end{align}
We have $\bF_j^T\bF_j = n_s I_d$, and let $\frac{\beta^2}{n_s}=\alpha^2$, $\sigma_c^2 = \alpha^2+\tau^2$, $\sigma_{p_1}^2 = \sigma_c^2 + N_1 \gamma_1^2$, and $\sigma_{p_2}^2 = \sigma_c^2 + N_2 \gamma_2^2$, then we have
\begin{align}
    \Sigma_j &= \left\{
    \begin{array}{@{}ll@{}}
    (\alpha^2 + \tau^2 +N_1 \gamma_1^2) I_d , & \text{if}\ c_j \in \mathcal{C}_1 \\
    (\alpha^2+\tau^2 +N_2 \gamma_2^2) I_d , & \text{if}\ c_j \in \mathcal{C}_2
    \end{array}\right.\\
    & = \left\{
    \begin{array}{@{}ll@{}}
    \sigma_{p_1}^2 I_d, \hspace{.32in}& \text{if}\ c_j \in \mathcal{C}_1 \\
    \sigma_{p_2}^2 I_d, & \text{if}\ c_j \in \mathcal{C}_2
    \end{array}\right.
\end{align}

Next, we discuss the optimality of \textsc{FedHDP} for the specified federated linear regression problem for the server's global model as well as the clients' personalized local models.

\subsection{Global Estimate on The Server}\label{LR_global}
In the considered federated setup, the server aims to find $\widehat{\boldsymbol{\theta}}^*$ described as follows
\begin{align}
    \widehat{\boldsymbol{\theta}}^* := \arg \min_{\widehat{\boldsymbol{\theta}}} \left\{   \frac{1}{2} \left\|\sum_{i \in [N]} w_i \boldsymbol{\psi}_i-\widehat{\boldsymbol{\theta}}  \right\|_2^2  \right\}. \label{global_objective_opt} %
\end{align}
The server's goal, in general, is to combine the client updates such that the estimation error of $\boldsymbol{\phi}$ in \eqref{global_regression} is minimized, i.e., the server aims to find the Bayes optimal solution described in \eqref{global_regression}. For the considered setup, the server aims to utilize the updates sent by clients, i.e., $\{\boldsymbol{\psi}_i : i\in [N]\}$, to estimate the vector $\boldsymbol{\phi}$. The estimate at the server is denoted by $\boldsymbol{\theta}$. Our goal in this part is to show that the solution to \eqref{global_objective_opt}, which is the solution in \textsc{FedHDP} algorithm converges to the Bayes optimal solution in \eqref{global_regression}. First, we state an important lemma that will be used throughout this section.

\begin{lemma}[Lemma 2 in \cite{li2021ditto}]\label{lemma2_ditto}
Let $\boldsymbol{\phi}$ be drawn from the non-informative uniform prior on $\mathbb{R}^d$. Also, let $\{\boldsymbol{\psi}_i : i\in [N]\}$ denote noisy observations of $\boldsymbol{\phi}$  with independent additive zero-mean independent Gaussian noise and corresponding covariance matrices $\{\Sigma_i : i\in [N]\}$. Let
\begin{align}
    \Sigma_{\boldsymbol{\phi}} = \Big( \sum_{i \in [N]} \Sigma_i^{-1} \Big)^{-1}.
\end{align}
Then, conditioned on $\{\boldsymbol{\psi}_i : i\in [N]\}$, we have
\begin{align}
    \boldsymbol{\phi} = \Sigma_{\boldsymbol{\phi}} \sum_{i \in [N]} \Sigma_i^{-1} \boldsymbol{\psi}_i + \bp_{\boldsymbol{\phi}},
\end{align}
where $\bp_{\boldsymbol{\phi}} \sim \mathcal{N}(\boldzero,\Sigma_{\boldsymbol{\phi}})$, which is independent of $\{\boldsymbol{\psi}_i : i\in [N]\}$.
\end{lemma}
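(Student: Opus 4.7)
The plan is to derive the posterior distribution of $\boldsymbol{\phi}$ given the observations $\{\boldsymbol{\psi}_i\}$ by a standard Bayesian ``complete the square'' argument, treating the uniform prior on $\mathbb{R}^d$ as the limit of a sequence of proper Gaussian priors (or equivalently, as an improper density proportional to $1$). Since the observations are conditionally independent given $\boldsymbol{\phi}$, the joint likelihood factors as a product of Gaussian densities, so the log posterior is a quadratic in $\boldsymbol{\phi}$, which means the posterior itself is Gaussian. All that remains is to identify its mean and covariance and re-express the conditional as the stated affine formula.

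First I would write the likelihood explicitly. For each $i$, $\boldsymbol{\psi}_i \mid \boldsymbol{\phi} \sim \mathcal{N}(\boldsymbol{\phi},\Sigma_i)$, so
\begin{equation}
p(\boldsymbol{\psi}_i \mid \boldsymbol{\phi}) \propto \exp\!\left(-\tfrac{1}{2}(\boldsymbol{\psi}_i - \boldsymbol{\phi})^T \Sigma_i^{-1}(\boldsymbol{\psi}_i - \boldsymbol{\phi})\right).
\end{equation}
Using the conditional independence of the noise terms, $p(\{\boldsymbol{\psi}_i\}\mid \boldsymbol{\phi}) = \prod_i p(\boldsymbol{\psi}_i\mid \boldsymbol{\phi})$, and combining with the improper uniform prior $p(\boldsymbol{\phi}) \propto 1$, Bayes' rule gives
\begin{equation}
p(\boldsymbol{\phi} \mid \{\boldsymbol{\psi}_i\}) \propto \exp\!\left(-\tfrac{1}{2}\sum_{i\in[N]}(\boldsymbol{\psi}_i - \boldsymbol{\phi})^T \Sigma_i^{-1}(\boldsymbol{\psi}_i - \boldsymbol{\phi})\right).
\end{equation}

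Next, I would expand the quadratic form in $\boldsymbol{\phi}$ and collect terms, giving $-\tfrac{1}{2}\boldsymbol{\phi}^T \bigl(\sum_i \Sigma_i^{-1}\bigr)\boldsymbol{\phi} + \boldsymbol{\phi}^T \bigl(\sum_i \Sigma_i^{-1}\boldsymbol{\psi}_i\bigr) + \text{const}$, where the constant does not depend on $\boldsymbol{\phi}$. Setting $\Sigma_{\boldsymbol{\phi}}^{-1} := \sum_i \Sigma_i^{-1}$ and $\boldsymbol{\mu}_{\boldsymbol{\phi}} := \Sigma_{\boldsymbol{\phi}} \sum_i \Sigma_i^{-1} \boldsymbol{\psi}_i$, completing the square yields
\begin{equation}
p(\boldsymbol{\phi} \mid \{\boldsymbol{\psi}_i\}) \propto \exp\!\left(-\tfrac{1}{2}(\boldsymbol{\phi} - \boldsymbol{\mu}_{\boldsymbol{\phi}})^T \Sigma_{\boldsymbol{\phi}}^{-1}(\boldsymbol{\phi} - \boldsymbol{\mu}_{\boldsymbol{\phi}})\right),
\end{equation}
which identifies the posterior as $\mathcal{N}(\boldsymbol{\mu}_{\boldsymbol{\phi}},\Sigma_{\boldsymbol{\phi}})$. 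Rewriting this as $\boldsymbol{\phi} = \boldsymbol{\mu}_{\boldsymbol{\phi}} + \boldsymbol{p}_{\boldsymbol{\phi}}$ with $\boldsymbol{p}_{\boldsymbol{\phi}} \sim \mathcal{N}(\boldzero,\Sigma_{\boldsymbol{\phi}})$, and noting that $\boldsymbol{\mu}_{\boldsymbol{\phi}}$ is a deterministic function of the conditioning random variables so $\boldsymbol{p}_{\boldsymbol{\phi}}$ is independent of $\{\boldsymbol{\psi}_i\}$, gives the claimed representation.

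The only genuinely delicate step is the justification of the improper prior. I would handle this by first running the above computation with a proper prior $\boldsymbol{\phi} \sim \mathcal{N}(\boldzero,\lambda I_d)$, obtaining a posterior covariance $(\sum_i \Sigma_i^{-1} + \lambda^{-1} I_d)^{-1}$ and a mean of the analogous form, then taking $\lambda \to \infty$ and observing that both expressions converge to the claimed $\Sigma_{\boldsymbol{\phi}}$ and $\boldsymbol{\mu}_{\boldsymbol{\phi}}$, provided $\sum_i \Sigma_i^{-1}$ is invertible (which holds whenever each $\Sigma_i$ is positive definite, in particular here). This limiting argument is the main technical point but is standard; everything else is algebraic manipulation.
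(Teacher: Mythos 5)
Your derivation is correct, but note that the paper itself does not prove this statement at all: it is imported verbatim as Lemma~2 of the cited \emph{Ditto} paper and used as a black box in the subsequent optimality arguments. What you have supplied is therefore a self-contained proof where the paper has only a citation. The content of your argument is the standard one: conditional independence of the observations given $\boldsymbol{\phi}$ makes the log-likelihood a sum of quadratics, the flat prior contributes nothing, completing the square identifies the posterior as $\mathcal{N}\big(\Sigma_{\boldsymbol{\phi}}\sum_i \Sigma_i^{-1}\boldsymbol{\psi}_i,\; \Sigma_{\boldsymbol{\phi}}\big)$, and the proper-prior limit $\lambda \to \infty$ justifies the improper prior since $\sum_i \Sigma_i^{-1}$ is positive definite. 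All of this is sound. The one place where your wording should be tightened is the independence claim at the end: the fact that $\boldsymbol{\mu}_{\boldsymbol{\phi}}$ is a deterministic function of the $\boldsymbol{\psi}_i$ does not by itself make the residual independent of them; what does is that the conditional law of $\boldsymbol{p}_{\boldsymbol{\phi}} = \boldsymbol{\phi} - \boldsymbol{\mu}_{\boldsymbol{\phi}}$ given $\{\boldsymbol{\psi}_i\}$ is $\mathcal{N}(\boldzero,\Sigma_{\boldsymbol{\phi}})$ for \emph{every} value of the conditioning variables, i.e.\ it does not depend on them, which is exactly the definition of independence here. Since you have already established the Gaussian form of the posterior with a covariance that is free of the data, this is a one-line fix rather than a gap.
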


Next, we state the Bayes optimality of the solution at the server.

\begin{lemma}[Global estimate optimality]\label{opt_global_estimate}
The proposed solution, from the server's point of view, with weights $w_j$'s chosen below, is Bayes optimal in the considered federated linear regression problem.
\begin{align}
    w_j = \left\{
    \begin{array}{@{}ll@{}}
    \frac{1}{N_1+N_2 r^*} , & \text{if}\ c_j \in \mathcal{C}_1 \\
    \frac{r^*}{N_1+N_2 r^*} , & \text{if}\ c_j \in \mathcal{C}_2
    \end{array}\right.
\end{align}
where
\begin{align}
    r^*=\frac{\sigma_c^2+N_1 \gamma_1^2}{\sigma_c^2 + N_2 \gamma_2^2}.\label{opt_ratio_LR}
\end{align}
Furthermore, the covariance of the estimation error is:
\begin{align}
    \Sigma_{s,\emph{opt}} = \frac{1}{N} \left[ \frac{(\sigma_c^2+N_1 \gamma_1^2) (\sigma_c^2 + N_2 \gamma_2^2)}{\sigma_c^2 + (1-\rho_1) N_1 \gamma_1^2 + \rho_1 N_2 \gamma_2^2} \right] I_d. \label{opt_covar_LR}
\end{align}
\end{lemma}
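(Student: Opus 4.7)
The plan is to apply Lemma~\ref{lemma2_ditto} (the Gaussian posterior formula cited from Ditto) directly to the private/non-private update model, match the resulting posterior-mean weights against the two-step FeO2 aggregation rule, and then simplify.

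First I would record the noise model in the form required by Lemma~\ref{lemma2_ditto}: each update $\boldsymbol{\psi}_j$ is a noisy observation of $\boldsymbol{\phi}$ with additive zero-mean Gaussian noise whose covariance, by the computation just above the lemma, is $\Sigma_j = \sigma_c^2 I_d$ when $c_j \in \mathcal{C}_{\text{np}}$ and $\Sigma_j = \sigma_p^2 I_d$ when $c_j \in \mathcal{C}_{\text{p}}$, where $\sigma_p^2 := \sigma_c^2 + N_{\text{p}}\gamma^2$. (Independence across clients holds because the $\bp_j$, $\bv_j$, and $\bl_j$ are drawn independently.) Lemma~\ref{lemma2_ditto} with a non-informative prior on $\boldsymbol{\phi}$ then gives the Bayes-optimal estimator (and simultaneously the minimizer of the \ref{global_regression} objective) as
\begin{align}
\boldsymbol{\theta}^{*} = \Sigma_{\boldsymbol{\phi}} \sum_{j \in [N]} \Sigma_j^{-1} \boldsymbol{\psi}_j, \qquad \Sigma_{\boldsymbol{\phi}} = \Big(\sum_{j \in [N]} \Sigma_j^{-1}\Big)^{-1}.
\end{align}
Because every $\Sigma_j$ is a scalar multiple of $I_d$, this collapses to a scalar-weighted average: non-private updates get weight proportional to $1/\sigma_c^2$ and private updates get weight proportional to $1/\sigma_p^2$.

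Next I would read off the FeO2 aggregation rule from Algorithm~\ref{FeO2_algorithm}: it outputs $\frac{1}{N_{\text{np}}+rN_{\text{p}}}\bigl[N_{\text{np}}\bar{\boldsymbol{\psi}}_{\text{np}}+rN_{\text{p}}\bar{\boldsymbol{\psi}}_{\text{p}}\bigr]$, i.e., every non-private client receives weight $w_{\text{np}} = 1/(N_{\text{np}}+rN_{\text{p}})$ and every private client receives weight $w_{\text{p}} = r/(N_{\text{np}}+rN_{\text{p}})$. Equating the ratio $w_{\text{p}}/w_{\text{np}} = r$ with the Bayes ratio $\sigma_c^2/\sigma_p^2$ pins down
\begin{align}
r^{*} = \frac{\sigma_c^2}{\sigma_p^2} = \frac{\sigma_c^2}{\sigma_c^2 + N_{\text{p}}\gamma^2},
\end{align}
which is exactly \eqref{opt_ratio_LR}. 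Since the FeO2 weights are already normalized to sum to one, matching the ratio is enough to match the full Bayes estimator.

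Finally, for the covariance~\eqref{opt_covar_LR}, I would just evaluate $\Sigma_{\boldsymbol{\phi}}$ in closed form: $\sum_j \Sigma_j^{-1} = \bigl(N_{\text{np}}/\sigma_c^2 + N_{\text{p}}/\sigma_p^2\bigr) I_d$, so $\Sigma_{\boldsymbol{\phi}} = \frac{\sigma_c^2 \sigma_p^2}{N_{\text{np}}\sigma_p^2 + N_{\text{p}}\sigma_c^2} I_d$. Expanding $\sigma_p^2 = \sigma_c^2 + N_{\text{p}}\gamma^2$ in the denominator gives $N\sigma_c^2 + N_{\text{np}}N_{\text{p}}\gamma^2 = N(\sigma_c^2 + \rho_{\text{np}} N_{\text{p}}\gamma^2)$, yielding \eqref{opt_covar_LR}. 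There is no real obstacle here: the whole argument is a direct specialization of Lemma~\ref{lemma2_ditto} to the two-group isotropic case, and the only thing to check carefully is the algebraic identification of FeO2's single scalar $r$ with the two-valued optimal weighting, which works precisely because the private and non-private clients each share a common covariance.
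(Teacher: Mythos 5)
Your proposal is correct and follows essentially the same route as the paper's proof: both invoke Lemma~\ref{lemma2_ditto} to identify the Bayes-optimal estimator as the inverse-covariance-weighted average, reduce to the two-group isotropic case with $\Sigma_j = \sigma_c^2 I_d$ or $\sigma_p^2 I_d$, match the single FeO2 ratio $r$ to the weight ratio $\sigma_c^2/\sigma_p^2$, and evaluate $\bigl(N_{\text{np}}/\sigma_c^2 + N_{\text{p}}/\sigma_p^2\bigr)^{-1}$ to obtain the stated covariance. Your remark that matching the ratio suffices because the FeO2 weights are already normalized is a slightly more explicit justification than the paper gives, but it is the same argument.
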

\begin{proof}
First, for the considered setup, Lemma \ref{lemma2_ditto} states that the optimal aggregator at the server is the weighted average of the client updates. The server observes the updates $\{\boldsymbol{\psi}_i : i\in [N]\}$, which are noisy observations of $\boldsymbol{\phi}$ with zero-mean Gaussian noise with corresponding covariance matrices $\{\Sigma_i : i\in [N]\}$. Then, the server computes its estimate $\boldsymbol{\theta}$ of $\boldsymbol{\phi}$ as
\begin{align}
    \boldsymbol{\theta} &= \Sigma_{\boldsymbol{\theta}} \sum_{i \in [N]} \Sigma_i^{-1} \boldsymbol{\psi}_i + \bp_{\boldsymbol{\theta}},
\end{align}
where $\bp_{\boldsymbol{\theta}} \sim \mathcal{N} (\boldzero, \Sigma_{\boldsymbol{\theta}})$ and
\begin{align}
    \Sigma_{\boldsymbol{\theta}} = \Big( \sum_{i \in [N]} \Sigma_i^{-1} \Big)^{-1} &= \Big( N_1(\sigma_c^2 + N_1 \gamma_1^2)^{-1} I_d + N_2 (\sigma_c^2+ N_2 \gamma_2^2)^{-1} I_d \Big)^{-1}\\
    & = \frac{1}{N} \left[ \frac{(\sigma_c^2+N_1 \gamma_1^2) (\sigma_c^2 + N_2 \gamma_2^2)}{\sigma_c^2 + (1-\rho_1) N_1 \gamma_1^2 + \rho_1 N_2 \gamma_2^2} \right] I_d. \label{opt_var_LR_original}
\end{align}
In \textsc{FedHDP} with only two subsets of clients, we only have a single hyperparameter to manipulate server-side, which is the ratio $r$ that is the ratio of the weight dedicated for clients with higher privacy level to the one for clients with the lower privacy level. To achieve the same noise variance as in \eqref{opt_var_LR_original} we need to choose the ratio $r$ carefully. To this end, setting $r=\frac{\sigma_c^2 + N_1 \gamma_1^2}{\sigma_c^2 + N_2 \gamma_2^2}$ in \textsc{FedHDP} results in additive noise variance in the estimate with zero mean and covariance matrix as follows
\begin{align}
    \Sigma_{s,\emph{opt}} = \frac{1}{N} \left[ \frac{(\sigma_c^2+N_1 \gamma_1^2) (\sigma_c^2 + N_2 \gamma_2^2)}{\sigma_c^2 + (1-\rho_1) N_1 \gamma_1^2 + \rho_1 N_2 \gamma_2^2} \right] I_d.
\end{align}

Therefore, the weighted average of the updates using the above weights, results in the solution being Bayes optimal, i.e., produces $\boldsymbol{\theta}^*$.
\end{proof}

\subsection{Personalized Local Estimates on Clients}\label{LR_local}
As mentioned in Section \ref{FLR_personalized}, \textsc{FedHDP} differs from \textsc{Ditto} in many ways. First, the global model aggregation is different, i.e., \textsc{FedAvg} was employed in \textsc{Ditto} compared to the 2-step aggregator in \textsc{FedHDP}. Second, in \textsc{Ditto} measuring the performance only considers benign clients, while in \textsc{FedHDP} it is important to measure the performance of all subsets of clients, and enhancing it across all clients is desired. In this part, we focus on the personalization part for both sets of clients. The goal at clients is to find the Bayes optimal solution to the \eqref{local_regression}. However, in the considered federated setup, clients don't have access to individual updates from other clients, but rather have the global estimate $\widehat{\boldsymbol{\theta}}^*$. So, we have the local \textsc{FedHDP} objective as
\begin{align}
    \hspace{1.2in} \widehat{\boldsymbol{\theta}}^*_j := \arg \min_{\widehat{\boldsymbol{\theta}}} \left\{ \frac{1}{2} \|\widehat{\boldsymbol{\theta}} - \widehat{\boldsymbol{\phi}}_j \|_2^2  +  \frac{\lambda}{2} \|\widehat{\boldsymbol{\theta}}-\widehat{\boldsymbol{\theta}}^*\|_2^2   \right\}.\tag{Local FedHDP obj.}\label{local_objective_opt}
\end{align}

First, we compute the Bayes optimal local estimate $\boldsymbol{\theta}_j^*$ of $\boldsymbol{\phi}_j$ for the local objective at client $c_j$. We consider client $c_j$, which can be either in $\mathcal{C}_1$ or $\mathcal{C}_2$, and compute their minimizer of \eqref{local_regression}. In this case, the client is given all other clients' estimates $\{\boldsymbol{\psi}_i : i\in [N] \setminus j\}$ and has their own local estimate $\hat{\boldsymbol{\phi}}_j$. To this end, we utilize Lemma \ref{lemma2_ditto} to find the optimal estimate $\boldsymbol{\theta}_j^*$. Given the updates by all other clients $\{\boldsymbol{\psi}_i : i\in [N] \setminus j\}$, the client can compute the estimate $\boldsymbol{\phi}^{\setminus j}$ of the value of $\boldsymbol{\phi}$ as
\begin{align}
    \boldsymbol{\phi}^{\setminus j} = \Sigma_{\boldsymbol{\phi}^{\setminus j}}\Big( \sum_{i=[N] \setminus j} \Sigma_i^{-1} \boldsymbol{\psi}_i \Big)+ \bp_{\boldsymbol{\phi}^{\setminus j}},
\end{align}
where $\bp_{\boldsymbol{\phi}^{\setminus j}} \sim \mathcal{N}(\boldzero, \Sigma_{\boldsymbol{\phi}^{\setminus j}})$ and
\begin{align}
    \Sigma_{\boldsymbol{\phi}^{\setminus j}} &= \Big( \sum_{i=[N] \setminus j} \Sigma_i^{-1} \Big)^{-1},\\
    &= \Big( m \frac{1}{\sigma_{p_1}^2}I_d  + n \frac{1}{\sigma_{p_2}^2} I_d \Big)^{-1},\\
    &= \frac{\sigma_{p_1}^2 \sigma_{p_2}^2}{n \sigma_{p_1}^2 + m \sigma_{p_2}^2} I_d,
\end{align}
where $n=N_2-1, m=N_1$ if $c_j \in \mathcal{C}_2$, or $n=N_2, m=N_1-1$ if $c_j \in \mathcal{C}_1$. Then, the client uses $\Sigma_{\boldsymbol{\phi}^{\setminus j}}$ and $\hat{\boldsymbol{\phi}}_j$ to estimate $\boldsymbol{\theta}_{j}^*$ as
\begin{align}
    \boldsymbol{\theta}_{j}^* &= \Sigma_{\boldsymbol{\theta}_{j}^*}\Big( (\Sigma_{\boldsymbol{\phi}^{\setminus j}}+\tau^2 I_d)^{-1} \boldsymbol{\phi}^{\setminus j} + (\sigma_c^2-\tau^2)^{-1} \hat{\boldsymbol{\phi}}_j \Big)+\bp_{\boldsymbol{\theta}_{j}^*} ,\\
    &= \Sigma_{\boldsymbol{\theta}_{j}^*}\bigg( \Big( \frac{n \sigma_{p_1}^2 + m \sigma_{p_2}^2}{\sigma_{p_1}^2 \sigma_{p_2}^2 + \tau^2(n \sigma_{p_1}^2 + m \sigma_{p_2}^2)} \Big) \boldsymbol{\phi}^{\setminus j} + \frac{1}{\sigma_c^2-\tau^2} \hat{\boldsymbol{\phi}}_j^* \bigg)+\bp_{\boldsymbol{\theta}_{j}},\label{sigma_j_estimate}
\end{align}
where $\bp_{\boldsymbol{\theta}_j^*} \sim \mathcal{N}(\boldzero, \Sigma_{\boldsymbol{\theta}_{j}^*})$ and
\begin{align}
    \Sigma_{\boldsymbol{\theta}_{j}^*} &= \bigg( \Big( \frac{\sigma_{p_1}^2 \sigma_{p_2}^2 + \tau^2(n \sigma_{p_1}^2 + m \sigma_{p_2}^2)}{n \sigma_{p_1}^2 + m \sigma_{p_2}^2} I_d \Big)^{-1} + ((\sigma_c^2-\tau^2) I_d)^{-1} \bigg)^{-1},\\
    &= \frac{(\sigma_c^2-\tau^2) \big(\sigma_{p_1}^2\sigma_{p_2}^2+\tau^2(n \sigma_{p_1}^2 + m \sigma_{p_2}^2)\big)}{\sigma_c^2\big(n \sigma_{p_1}^2 + m \sigma_{p_2}^2 \big) + \sigma_{p_1}^2 \sigma_{p_2}^2 } I_d.
\end{align}
We expand \eqref{sigma_j_estimate} as
\begin{align}
    \boldsymbol{\theta}_{j}^* = \frac{\sigma_{p_1}^2 \sigma_{p_2}^2 + \tau^2(n \sigma_{p_1}^2 + m \sigma_{p_2}^2)}{\sigma_c^2(n \sigma_{p_1}^2 + m \sigma_{p_2}^2)+\sigma_{p_1}^2 \sigma_{p_2}^2}\hat{\boldsymbol{\phi}}_j &+ \frac{\sigma_{p_2}^2(\sigma_c^2 -\tau^2) }{\sigma_c^2(n \sigma_{p_1}^2 + m \sigma_{p_2}^2)+\sigma_{p_1}^2 \sigma_{p_2}^2} \sum_{\substack{c_i\in \mathcal{C}_1\\i \neq j}} \boldsymbol{\psi}_i \nonumber \\
    &+ \frac{\sigma_{p_1}^2(\sigma_c^2 -\tau^2) }{\sigma_c^2(n \sigma_{p_1}^2 + m \sigma_{p_2}^2)+\sigma_{p_1}^2 \sigma_{p_2}^2} \sum_{\substack{c_i\in \mathcal{C}_2\\i \neq j}} \boldsymbol{\psi}_i + \bp_{\boldsymbol{\theta}_{j}^*}. \label{minimizer_theta}
\end{align}
This is the Bayes optimal solution to the local Bayes objective optimization problem for client $c_j$ in \eqref{local_regression}. Now, recall that in \textsc{FedHDP}, the clients do not have access to individual client updates, but rather the global model. Therefore, the clients solve the \textsc{FedHDP} local objective in \eqref{local_objective_opt}. Given a value of $\lambda_j$ and the global estimate $\hat{\boldsymbol{\theta}}^*$, the minimizer $\hat{\boldsymbol{\theta}}_j(\lambda_j)$ of \eqref{local_objective_opt} is
\begin{align}
    \hat{\boldsymbol{\theta}}_j(\lambda_j) &= \frac{1}{1+\lambda_j} \Big( \hat{\boldsymbol{\phi}}_j+\lambda_j \hat{\boldsymbol{\theta}}^* \Big)\label{opt_local_model_lambda}\\
    &=\frac{1}{1+ \lambda_j} \bigg( 
    \frac{(N_1+N_2 r)+\lambda_j i_j}{ (N_1+N_2 r)}\hat{\boldsymbol{\phi}}_j + \frac{\lambda_j}{(N_1+N_2 r)} \sum_{\substack{c_i\in \mathcal{C}_1\\i \neq j}} \boldsymbol{\psi}_i+ \frac{\lambda_j r}{(N_1+N_2 r)} \sum_{\substack{c_i\in \mathcal{C}_2 \\i \neq j}} \boldsymbol{\psi}_i \bigg), \label{minimizer_lambda}
\end{align}
where $i_j = 1$ if $c_j \in \mathcal{C}_1$ or $i_j=r$ if $c_j \in \mathcal{C}_2$. Now, we are ready to state the Bayes optimality of the local FedHDP objective for optimal values $\lambda_j^*$ for all clients.

\begin{lemma}[Local estimates optimality]\label{opt_personalized_estimate}
The solution to the local FedHDP objective from the clients' point of view using $\lambda_j^*$ chosen below, under the assumption of global estimate optimality stated in Lemma \ref{opt_global_estimate}, is Bayes optimal in the considered federated linear regression problem.
\begin{align}
    \lambda_j^* = \left\{
    \begin{array}{@{}ll@{}}
    \frac{N (1+\Upsilon^2) + N_1 \Gamma_2^2 + N_2 \Gamma_1^2}{N\Upsilon^2(1+\Upsilon^2)+\Upsilon^2 ((N_2+1) \Gamma_1^2 + N_1 \Gamma_2^2)+\Gamma_1^2(1+\Gamma_2^2)}, & \emph{if}\ c_j \in \mathcal{C}_1 \\
    \frac{N (1+\Upsilon^2) + N_1 \Gamma_2^2 + N_2 \Gamma_1^2} {N \Upsilon^2 (1 + \Upsilon^2) +\Upsilon^2 ( N_2 \Gamma_1^2 + (N_1+1) \Gamma_2^2) + \Gamma_2^2 (1 + \Gamma_1^2)}, & \emph{if}\ c_j \in \mathcal{C}_2
    \end{array}\right.\label{opt_lambda}
\end{align}
where $\Upsilon^2 = \frac{\tau^2}{\alpha^2}$, $\Gamma_1^2 = \frac{N_1 \gamma_1^2}{\alpha^2}$, and $\Gamma_2^2 = \frac{N_2 \gamma_2^2}{\alpha^2}$.
\end{lemma}
\begin{proof}
To prove this lemma, as shown in \cite{li2021ditto}, we only need to find the optimal values of $\lambda_j^*$ that minimize the following
\begin{align}
    \lambda_j^* = \arg \min_{\lambda}\mathbb{E}\big(\|\boldsymbol{\theta}_j^*-\hat{\boldsymbol{\theta}}_j(\lambda) \|_2^2 \big| \boldsymbol{\phi}^{\setminus j}, \hat{\boldsymbol{\phi}}_j \big)
\end{align}
for private and non-private clients. To compute the values of $\lambda_j^*$, we plug in the values of $\boldsymbol{\theta}_j^*$ from \eqref{minimizer_theta} and $\boldsymbol{\theta}_j(\lambda)$ in \eqref{minimizer_lambda}, which gives us the following
\begin{gather}
    \lambda_1 = \frac{(N_1+ N_2 r) \big( \sigma_c^2 (n \sigma_{p_1}^2 + m \sigma_{p_2}^2) - \tau^2 (n \sigma_{p_1}^2 + m \sigma_{p_2}^2) \big)}{(N_1+ N_2 r) \big( \tau^2 (n \sigma_{p_1}^2 + m \sigma_{p_2}^2) + \sigma_{p_1}^2 \sigma_{p_2}^2 \big) - i_j \big(\sigma_c^2 (n \sigma_c^2 + m \sigma_p^2) + \sigma_{p_1}^2 \sigma_{p_2}^2\big) },\\
    \lambda_2 = \frac{(N_1+ N_2 r) (\sigma_c^2-\tau^2) \sigma_{p_2}^2}{\sigma_c^2 (n \sigma_{p_1}^2 + m \sigma_{p_2}^2) + \sigma_{p_1}^2 \sigma_{p_2}^2-(N_1+ N_2 r)(\sigma_c^2-\tau^2)\sigma_{p_2}^2},\\
    \lambda_3 = \frac{(N_1+ N_2 r) (\sigma_c^2-\tau^2) \sigma_{p_2}^2}{\sigma_c^2 (n \sigma_{p_1}^2 + m \sigma_{p_2}^2) + \sigma_{p_1}^2 \sigma_{p_2}^2-(N_1+ N_2 r)(\sigma_c^2-\tau^2)\sigma_{p_2}^2},\\
    \text{and\:\:} \lambda_j^* = \frac{1}{3}(\lambda_1 + \lambda_2 + \lambda_3) \label{opt_lambda_general}
\end{gather}
where $r=\frac{\sigma_{p_1}^2}{\sigma_{p_2}^2}$. For client $c_j \in \mathcal{C}_1$, we have $i_j = 1, n = N_2$ and $m = N_1-1$. Setting $\Upsilon^2 = \frac{\tau^2}{\alpha^2}$, $\Gamma_1^2 = \frac{N_1 \gamma_1^2}{\alpha^2}$, and $\Gamma_2^2 = \frac{N_2 \gamma_2^2}{\alpha^2}$ and substituting in \eqref{opt_lambda_general} gives the desired result in \eqref{opt_lambda}. For client $c_j \in \mathcal{C}_2$, we have $i_j = r, n = N_2-1$ and $m = N_1$. Setting $\Upsilon^2 = \frac{\tau^2}{\alpha^2}$, $\Gamma_1^2 = \frac{N_1 \gamma_1^2}{\alpha^2}$, and $\Gamma_2^2 = \frac{N_2 \gamma_2^2}{\alpha^2}$ and substituting in \eqref{opt_lambda_general} gives the desired results in \eqref{opt_lambda}. As a result, the resulting $\hat{\boldsymbol{\theta}}_j(\lambda_j^*)$ is Bayes optimal.
\end{proof}

Next, we provide a few examples of corner cases for both $\lambda_{\text{p}}^*$ and $\lambda_{\text{np}}^*$ for the considered linear regression setup:
\begin{itemize}
    \item $r \rightarrow 1$, i.e., noise added for privacy is similar for both sets of clients $\Gamma_1^2 \rightarrow \Gamma_2^2$, then we have $\lambda_1^* \rightarrow \frac{N(1+\Upsilon^2)+N\Gamma_2^2}{N \Upsilon^2(1+\Upsilon^2) + (N+1) \Upsilon^2 \Gamma_2^2 + (1+\Gamma_2^2)\Gamma_2^2}$ and $\lambda_2^* \rightarrow \frac{N(1+\Upsilon^2)+N\Gamma_2^2}{N \Upsilon^2(1+\Upsilon^2) + (N+1) \Upsilon^2 \Gamma_2^2 + (1+\Gamma_2^2)\Gamma_2^2}$. If $\Gamma_1^2 = \Gamma_2^2 = 0$, then we have $\lambda_1^* = \lambda_2^* \rightarrow \frac{1}{\Upsilon^2}$ as in \textsc{Ditto} with \textsc{FedAvg} and no malicious clients.
    \item $N_2 \rightarrow N$, i.e., all clients have the same privacy level, as in \textsc{DP-FedAvg}, $\lambda_2^* \rightarrow  \frac{N}{\Upsilon^2 N + \Gamma_2^2}$.
    \item $\alpha^2 \rightarrow 0$, $\lambda_2^* \rightarrow 0$ and $\lambda_1^* \rightarrow 0$. The optimal estimator for all clients approaches the local estimator, i.e., $\hat{\boldsymbol{\theta}}_j(\lambda_j^*) \rightarrow \hat{\boldsymbol{\phi}}_j$.
    \item $\tau^2 \rightarrow 0$, i.e., all clients have IID samples, $\lambda_1^* \rightarrow \frac{N+N_2 \Gamma_1^2 + N_1 \Gamma_2^2}{\Gamma_1^2(1+\Gamma_2^2)}$ and $\lambda_2^* \rightarrow \frac{N+N_2 \Gamma_1^2 + N_1 \Gamma_2^2}{\Gamma_2^2(1+\Gamma_1^2)}$.
\end{itemize}

\subsection{Optimality of \textsc{FedHDP}}
Next, we show the convergence of the \textsc{FedHDP} algorithm to the \textsc{FedHDP} global and local objectives for the linear regression problem described above as follows
\begin{lemma}[FedHDP convergence]\label{convergence_LR}
FedHDP, with learning rate $\eta=1$ and $\eta_p=\frac{1}{1+\lambda_j}$ converges to the global FedHDP objective and the local FedHDP objective.
\end{lemma}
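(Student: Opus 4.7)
The idea is that in the federated linear regression setup both the vanilla client loss and the Ditto-regularized personal loss are strictly convex quadratics, so a single gradient step with step size equal to the reciprocal of their Hessian eigenvalue recovers the exact minimizer. With $\eta = 1$ and $\eta_p = 1/(1+\lambda_j)$ the inner loops therefore collapse to closed-form updates, and the only thing left to check is that substituting these into the FeO2 server aggregation reproduces the minimizers identified in Lemmas \ref{opt_global_estimate} and \ref{opt_personalized_estimate}.

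First I would analyse the vanilla client update. Since $\nabla f_j(\boldsymbol{\theta}) = \boldsymbol{\theta} - \hat{\boldsymbol{\phi}}_j$ has Hessian $I_d$, a single full-batch step from $\boldsymbol{\theta}^0$ with $\eta = 1$ lands at $\hat{\boldsymbol{\phi}}_j$; subsequent epochs are fixed since the gradient vanishes. Moving the privacy noise to the client side in keeping with the convention of the preceding subsections, this yields $\Delta \boldsymbol{\theta}^t_j = \boldsymbol{\psi}_j - \boldsymbol{\theta}^0$ for every $c_j$. Substituting these into the two-step server aggregator of Algorithm \ref{FeO2_algorithm} and cancelling the $\boldsymbol{\theta}^0$ contribution simplifies $\boldsymbol{\theta}^{t+1}$ to $\sum_i w_i \boldsymbol{\psi}_i$ with the weights $w_i = 1/(N_{\text{np}}+r N_{\text{p}})$ for $c_i \in \mathcal{C}_{\text{np}}$ and $w_i = r/(N_{\text{np}}+r N_{\text{p}})$ for $c_i \in \mathcal{C}_{\text{p}}$. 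These are exactly the weights appearing in the global FeO2 objective \eqref{global_objective_opt}, so the server iterate coincides with $\widehat{\boldsymbol{\theta}}^*$ after a single round and stays fixed in subsequent ones.

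Second, I would handle the Ditto branch. The local objective $\tfrac{1}{2}\|\boldsymbol{\theta}_j - \hat{\boldsymbol{\phi}}_j\|_2^2 + \tfrac{\lambda}{2}\|\boldsymbol{\theta}_j - \boldsymbol{\theta}^0\|_2^2$ has gradient $(1+\lambda)\boldsymbol{\theta}_j - \hat{\boldsymbol{\phi}}_j - \lambda \boldsymbol{\theta}^0$ and constant Hessian $(1+\lambda)I_d$, so the step size $\eta_p = 1/(1+\lambda)$ is again the exact Newton step. Expanding $\boldsymbol{\theta}_j - \eta_p\bigl[(1+\lambda)\boldsymbol{\theta}_j - \hat{\boldsymbol{\phi}}_j - \lambda \boldsymbol{\theta}^0\bigr]$ cancels the $\boldsymbol{\theta}_j$ term and produces $\tfrac{1}{1+\lambda}(\hat{\boldsymbol{\phi}}_j + \lambda \boldsymbol{\theta}^0)$, namely equation \eqref{opt_local_model_lambda}, independently of the initialization; further updates are fixed points. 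Combining this with the previous step, once the server broadcast has reached $\widehat{\boldsymbol{\theta}}^*$ the personalized iterate at each client equals the minimizer of the local FeO2 objective \eqref{local_objective_opt}.

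\textbf{Main obstacle.} Because both losses are quadratic and the specified step sizes are the Newton steps, no genuine convergence analysis is required; the entire argument reduces to algebraic bookkeeping. The only delicate point is to account for the privacy noise consistently with the client-side convention adopted earlier, so that Lemma \ref{opt_global_estimate} can be invoked verbatim; this amounts to verifying that the variance of $\bl_j$ placed at the client produces the correct aggregated covariance $\gamma^2 I_d$ after server averaging over the $N_{\text{p}}$ private participants.
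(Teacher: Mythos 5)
Your proposal is correct and follows essentially the same route as the paper's proof: both observe that the client loss and the Ditto-regularized loss are quadratics whose exact minimizers are reached in a single gradient step with $\eta=1$ and $\eta_p=\frac{1}{1+\lambda_j}$, making the resulting iterates fixed points of the algorithm. Your write-up is slightly more explicit than the paper in verifying that the server's two-step aggregation of the updates $\boldsymbol{\psi}_j-\boldsymbol{\theta}^0$ reproduces the weights of the global FeO2 objective, but this is a matter of detail rather than a different argument.
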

\begin{proof}
In the considered setup, we denote $\hat{\boldsymbol{\phi}}_j=\frac{1}{n_s}\sum_{i=1}^{n_s} \bx_{j,i}$ at client $c_j$. The client updates the global estimation $\boldsymbol{\theta}$ by minimizing the loss function in \eqref{vector_estimation}. The global estimation update at the client follows
\begin{align}
    \boldsymbol{\theta} \leftarrow \boldsymbol{\theta} - \eta (\boldsymbol{\theta}-\hat{\boldsymbol{\phi}}_j).
\end{align}
Updating the estimation once with $\eta=1$ results in the global estimation update being $\hat{\boldsymbol{\phi}}_j$, adding the noise results in the same $\boldsymbol{\psi}_j$, and hence the global estimate in the next iteration is unchanged.
As for the local FedHDP estimation, when the client receives the global estimate $\boldsymbol{\theta}$ after the first round, the client updates its estimate $\boldsymbol{\theta}_j$ as
\begin{align}
    \boldsymbol{\theta}_j \leftarrow \boldsymbol{\theta}_j - \eta_p \big( (\boldsymbol{\theta}_j-\hat{\boldsymbol{\phi}}_j)+\lambda_j (\boldsymbol{\theta}_j-\boldsymbol{\theta}) \big).
\end{align}
Updating the estimate once with $\eta_p=\frac{1}{1+\lambda_j}$ gives $\boldsymbol{\theta}_j = \frac{1}{1+\lambda_j}(\hat{\boldsymbol{\phi}}_j+\lambda_j \boldsymbol{\theta})$, which is the solution to the local FedHDP objective in \eqref{opt_local_model_lambda}. Hence, FedHDP converges to the global and local FedHDP objectives.
\end{proof}

Next, we state the optimality theorem of \textsc{FedHDP} algorithm for the considered setup described above.
\begin{theorem}[FedHDP optimality]\label{FedHDP_optimality}
FedHDP from the server's point of view with ratio $r^*$ chosen below is Bayes optimal (i.e., $\boldsymbol{\theta}$ converges to $\boldsymbol{\theta}^*$) in the considered federated linear regression
problem.
\begin{align}
    r^*=\frac{\sigma_c^2+N_1 \gamma_1^2}{\sigma_c^2 + N_2 \gamma_2^2}.
\end{align}
Furthermore, FedHDP from the client's point of view, with $\lambda_j^*$ chosen below, is Bayes optimal (i.e., $\boldsymbol{\theta}_j$ converges to $\boldsymbol{\theta}^*_j$ for each client $j \in [N]$) in the considered federated linear regression problem.
\begin{align}
    \lambda_j^* = \left\{
    \begin{array}{@{}ll@{}}
    \frac{N (1+\Upsilon^2) + N_1 \Gamma_2^2 + N_2 \Gamma_1^2}{N\Upsilon^2(1+\Upsilon^2)+\Upsilon^2 ((N_2+1) \Gamma_1^2 + N_1 \Gamma_2^2)+\Gamma_1^2(1+\Gamma_2^2)}, & \emph{if}\ c_j \in \mathcal{C}_1 \\
    \frac{N (1+\Upsilon^2) + N_1 \Gamma_2^2 + N_2 \Gamma_1^2} {N \Upsilon^2 (1 + \Upsilon^2) +\Upsilon^2 ( N_2 \Gamma_1^2 + (N_1+1) \Gamma_2^2) + \Gamma_2^2 (1 + \Gamma_1^2)}, & \emph{if}\ c_j \in \mathcal{C}_2
    \end{array}\right.
\end{align}
\end{theorem}
\begin{proof}
Follows by observing Lemma \ref{convergence_LR}, which states that the algorithm converges to the global and local FedHDP objectives, then by Lemma \ref{opt_global_estimate} and Lemma \ref{opt_personalized_estimate}, which state that the solution to the FedHDP objective is the Bayes optimal solution for both global and local objectives.
\end{proof}

\begin{figure}[t!]
    \centering
    \captionsetup{font=small}
    \includegraphics[height=2.8in]{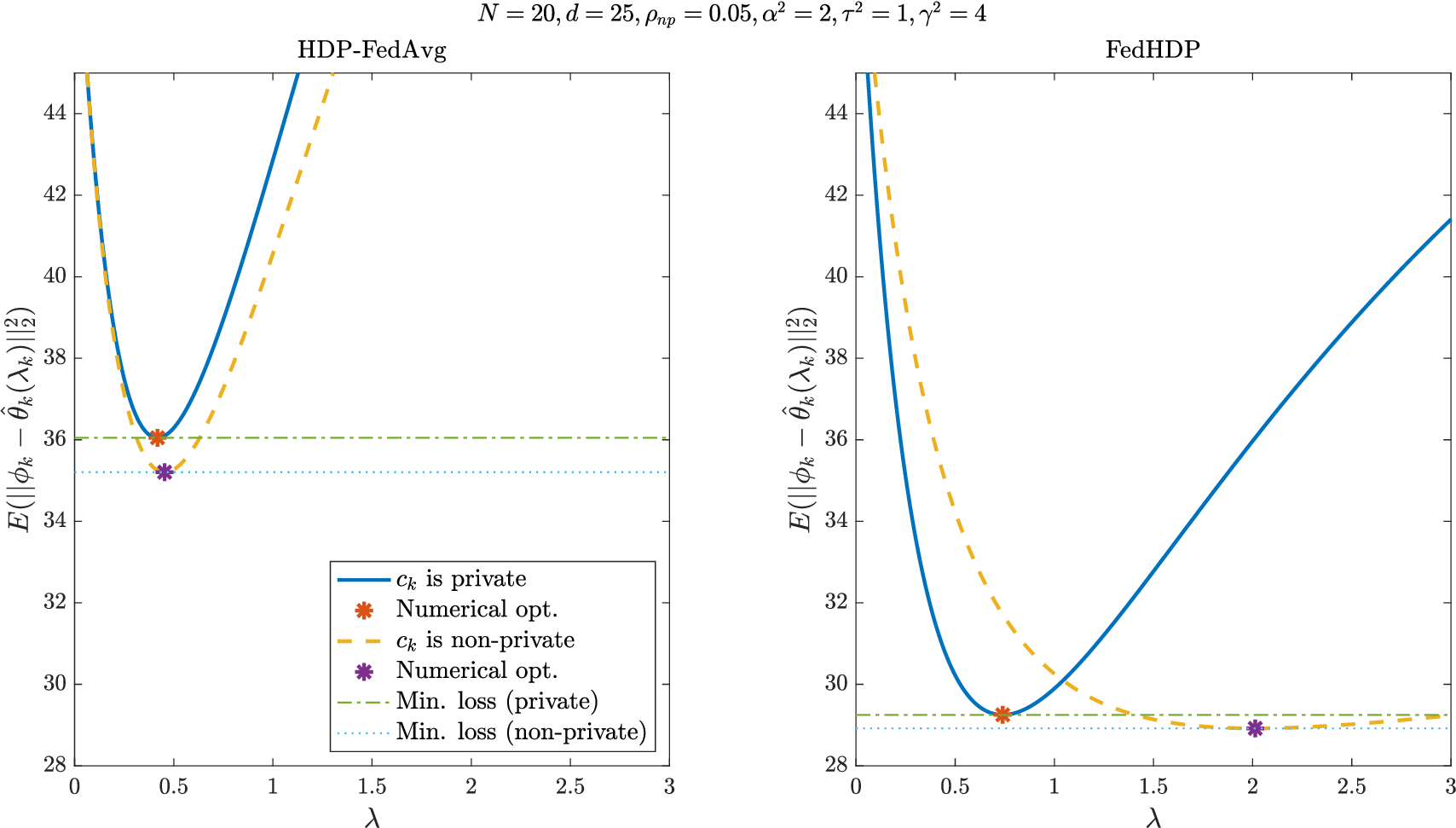} 
    \caption{The effect of opting out on the personalized local model estimate for a linear regression problem as a function of $\lambda$ when employing (left) \textsc{HDP-FedAvg} and (right) \textsc{FedHDP}.}
    \label{gain_opting_out_LR}
\end{figure}

\subsection{Privacy-Utility Tradeoff}\label{LR_tradeoff}
Let us consider the special case of opt-out of privacy in this simplified linear regression setup, i.e., $\gamma_1^2 = 0$ and $\gamma_2^2 = \gamma^2$. We would like to observe the effect of opting out of privacy on the client's personalized local model, compared to the one where the client remains private. We show an experiment comparing \textsc{FedHDP} using $r^*$ against \textsc{HDP-FedAvg} for two scenarios. The first is when the client chooses to opt out of privacy, and the second is when the client chooses to remain private. See Figure \ref{gain_opting_out_LR} for the results of such experiment. We can see that \textsc{FedHDP} outperforms the one with \textsc{HDP-FedAvg}, and the cost of remaining private is evident in terms of higher loss at the client.

\subsection{Extension Beyond Two Privacy Levels}
\label{app:more-than-two}
The setup for federated learning with two privacy levels was presented to show the steps and the explicit expressions for the values of the ratio and the regularization hyperparameters. Next, we show briefly that the same derivation can be extended to find the explicit expressions for the hyperparameters in a general case of federated linear regression with clients choosing one of $l$ privacy levels.  To start, assume that we have $l$ privacy levels where clients can be split into $l$ subsets denoted by $\mathcal{C}_i$ for $i=1,2,\mydots,l$, each has $N_i>1$ clients, respectively, while other notations are still the same. Notice that this setup contains the most general case where $l = |\mathcal{C}|$ and each client has their own privacy level.

Similar to the setup with two privacy levels, each client sends their update to the server after adding the appropriate amount of Gaussian noise, i.e., $\mathcal{N}(\boldzero, N_i\gamma_i^2 I_d)$ for client $c_j \in \mathcal{C}_i$, for privacy. Let us denote the updates sent to the server by $\{\boldsymbol{\psi} : i\in [N]\}$ which are estimates of $\boldsymbol{\phi}$ with zero-mean Gaussian noise with corresponding covariance matrices $\{\Sigma_i : i\in [N]\}$, where $\Sigma_j$ for client $c_j \in \mathcal{C}_i$ is expressed as

\begin{align}
    \Sigma_j &= (\alpha^2 + \tau^2 +N_i \gamma_i^2) I_d = \sigma_j^2 I_d = \sigma_{p_i}^2 I_d.
\end{align}

Then, the server computes its estimate $\boldsymbol{\theta}$ of $\boldsymbol{\phi}$ as
\begin{align}
    \boldsymbol{\theta} &= \Sigma_{\boldsymbol{\theta}} \sum_{i \in [N]} \Sigma_i^{-1} \boldsymbol{\psi}_i + \bp_{\boldsymbol{\theta}},
\end{align}

where $\bp_{\boldsymbol{\theta}} \sim \mathcal{N}(\boldzero, \Sigma_{\theta})$ and
\begin{align}
    \Sigma_{\boldsymbol{\theta}} = \Big( \sum_{i \in [N]} \Sigma_i^{-1} \Big)^{-1} = \Big( \frac{\prod_{i\in [l]} \sigma_{p_i}^2}{\sum_{i\in [l]} N_i \prod_{k \in [l] \setminus i} \sigma_{p_k}^2} \Big) I_d.
\end{align}

In \textsc{FedHDP}, the server applies a weighted averaging to the clients' updates $\boldsymbol{\psi}_i$'s of this form
\begin{align}
    \boldsymbol{\theta} = \sum_{i \in [N]} w_i \boldsymbol{\psi}_i.
\end{align}
To achieve the optimal covariance of the estimation at the server, the resulting weights used for client $c_j$ in $\mathcal{C}_i$ at the server as follows
\begin{align}
    w_j = \frac{r_i}{\sum_{k=[l]} N_k r_k}.
\end{align}
In this case, we have $l$ ratio hyperparameters $r_i$'s to tune. Similar to the approach followed for the 2-level privacy heterogeneity, we can find the optimal values of the ratio hyperparameters that achieve the optimal covariance of the estimation at the server. The optimal values of $r_i^*$ are
\begin{align}
    r_i^* = \frac{\sigma_{p_1}^2}{\sigma_{p_i}^2}.
\end{align}
Next, we compute the Bayes optimal local estimate $\boldsymbol{\theta}_j^*$ of $\boldsymbol{\phi}_j$ for the local objective at client $c_j$. We consider client $c_j$, which can be in any private set of clients $\mathcal{C}_i$, and compute their minimizer of \eqref{local_regression}. In this case, the client has access to all other clients' private estimates $\{\boldsymbol{\psi}_i : i \in [N] \setminus j\}$ and has their own local non-private estimate $\hat{\boldsymbol{\phi}}_j$. Similar to the approach before, we utilize Lemma \ref{lemma2_ditto} to find the optimal estimate $\boldsymbol{\theta}_j^*$. Given the updates by all other clients $\{\boldsymbol{\psi}_i : i\in [N] \setminus j\}$, the client can compute the estimate $\boldsymbol{\phi}^{\setminus j}$ of the value of $\boldsymbol{\phi}$ as
\begin{align}
    \boldsymbol{\phi}^{\setminus j} = \Sigma_{\boldsymbol{\phi}^{\setminus j}}\Big( \sum_{i=[N] \setminus j} \Sigma_i^{-1} \boldsymbol{\psi}_i \Big)+ \bp_{\boldsymbol{\phi}^{\setminus j}},
\end{align}
where $\bp_{\boldsymbol{\phi}^{\setminus j}} \sim \mathcal{N}(\boldzero, \Sigma_{\boldsymbol{\phi}^{\setminus j}})$ and
\begin{align}
    \Sigma_{\boldsymbol{\phi}^{\setminus j}} &= \Big( \sum_{i=[N] \setminus j} \Sigma_i^{-1} \Big)^{-1},\\
    &= \Big( M_1 \frac{1}{\sigma_{p_1}^2}I_d  + M_2 \frac{1}{\sigma_{p_2}^2} I_d + \mydots + M_l \frac{1}{\sigma_{p_l}^2}I_d \Big)^{-1},\\
    &= \frac{\prod_{i\in [l]} \sigma_{p_i}^2}{\sum_{i\in [l]} M_i \prod_{k \in [l] \setminus i} \sigma_{p_k}^2} I_d,
\end{align}
where
\begin{align}
    M_i= \left\{
    \begin{array}{@{}ll@{}}
    N_i & \text{if}\ c_j \notin \mathcal{C}_i \\
    N_i-1, & \text{if}\ c_j \in \mathcal{C}_i
    \end{array}\right. .
\end{align}

Therefore, we have the following

\begin{align}
    \boldsymbol{\phi}^{\setminus j} = \frac{1}{\sum_{i\in [l]} M_i \prod_{k \in [l] \setminus i} \sigma_{p_k}^2} \Big( \sum_{i=[l]} \prod_{k_1 \in [l] \setminus i} \sigma_{p_{k_1}}^2 \sum_{\substack{c_{k_2} \in \mathcal{C}_i\\ k_2 \neq j}} \boldsymbol{\psi}_i \Big)+ \bp_{\boldsymbol{\phi}^{\setminus j}}.
\end{align}
Then, the client uses $\Sigma_{\boldsymbol{\phi}^{\setminus j}}$ and $\hat{\boldsymbol{\phi}}_j$ to estimate $\boldsymbol{\theta}_{j}^*$ as
\begin{align}
    \boldsymbol{\theta}_{j}^* &= \Sigma_{\boldsymbol{\theta}_{j}^*} \Big( \big((\frac{\prod_{i\in [l]} \sigma_{p_i}^2}{\sum_{i\in [l]} M_i \prod_{k \in [l] \setminus i} \sigma_{p_k}^2}+\tau^2) I_d \big)^{-1} \boldsymbol{\phi}^{\setminus j} + (\sigma_c^2-\tau^2)^{-1} \hat{\boldsymbol{\phi}}_j \Big)+\bp_{\boldsymbol{\theta}_{j}^*} ,\label{sigma_j_estimate2}
\end{align}
where $\bp_{\boldsymbol{\theta}_j^*} \sim \mathcal{N}(\boldzero, \Sigma_{\boldsymbol{\theta}_{j}^*})$ and
\begin{align}
    \Sigma_{\boldsymbol{\theta}_{j}^*} &= \bigg( \Big( \frac{\prod_{i\in [l]} \sigma_{p_i}^2 + \tau^2 \sum_{i\in [l]} M_i \prod_{k \in [l] \setminus i} \sigma_{p_k}^2 }{\sum_{i\in [l]} M_i \prod_{k \in [l] \setminus i} \sigma_{p_k}^2}  \Big)^{-1} + (\sigma_c^2-\tau^2)^{-1} \bigg)^{-1} I_d.
\end{align}
We expand \eqref{sigma_j_estimate2} as
\begin{align}
    \boldsymbol{\theta}_{j}^* = \Sigma_{\boldsymbol{\theta}_{j}^*} (\sigma_c^2-\tau^2)^{-1} \hat{\boldsymbol{\phi}}_j +  \Sigma_{\boldsymbol{\theta}_{j}^*} \Big( \frac{1}{\prod_{i\in [l]} \sigma_{p_i}^2 + \tau^2 \sum_{i\in [l]} M_i \prod_{k \in [l] \setminus i} \sigma_{p_k}^2 } \sum_{i=[l]} \prod_{k_1 \in [l] \setminus i} \sigma_{p_{k_1}}^2 \sum_{\substack{c_{k_2} \in \mathcal{C}_i\\ k_2 \neq j}} \boldsymbol{\psi}_i \Big)+\bp_{\boldsymbol{\theta}_{j}^*}. \label{minimizer_theta_l_privacy}
\end{align}
This is the Bayes optimal solution to the local Bayes objective optimization problem for client $c_j$ in \eqref{local_regression}. Next, we know in \textsc{FedHDP} the clients do not have access to individual client updates, but rather the global model. As a result, the clients solve the \textsc{FedHDP} local objective in \eqref{local_objective_opt}. Given a value of $\lambda_j$ and the global estimate $\hat{\boldsymbol{\theta}}^*$, the minimizer $\hat{\boldsymbol{\theta}}_j(\lambda_j)$ of \eqref{local_objective_opt} is
\begin{align}
    \hat{\boldsymbol{\theta}}_j(\lambda_j) &= \frac{1}{1+\lambda_j} \Big( \hat{\boldsymbol{\phi}}_j+\lambda_j \hat{\boldsymbol{\theta}}^* \Big)\\
    &=\frac{1}{1+ \lambda_j} \bigg( 
    \frac{\sum_{i=[l]} N_i r_i+\lambda_j i_j}{ \sum_{i=[l]} N_i r_i}\hat{\boldsymbol{\phi}}_j + \frac{1}{\sum_{i=[l]} N_i r_i} \sum_{i \in [l]} \lambda_j r_i \sum_{\substack{c_k\in \mathcal{C}_i \\k \neq j}} \boldsymbol{\psi}_k \bigg), \label{minimizer_lambda_l_privacy}
\end{align}
where $i_j=r_i$ for client $c_j \in \mathcal{C}_i$. Note that we have $l+1$ terms in both \eqref{minimizer_lambda_l_privacy} and \eqref{minimizer_theta_l_privacy}, which we can use to compute the value of $\lambda_j^*$ as done in a previous part, and results similar to the ones in the prior parts of this appendix then follow from such findings. Note that computing the expressions of the optimal $\lambda_j^*$ in closed form for each one of the $l$ sets of private clients in the considered setup is involved; hence, our brief presentation of the sketch of the solution.

\section{Federated Point Estimation}\label{fpe_appendix}
In this appendix, we provide a brief discussion of a special case of the considered federated linear regression and make use of the results stated in Appendix \ref{LR_appendix}. In the federated point estimation problem, $\bF_j=[1,1,\mydots,1]^T$ of length $n_s$ is available at client $c_j$. For reference, the federated point estimation algorithm is described in Algorithm \ref{FedHDP_FPE_algorithm}. The results in the previous appendix can be used for federated point estimation by setting $d=1$. In the remainder of this appendix, we assume the opt-out of privacy scenario where clients choose to be either private or non-private in the setup. First, we state the global estimate optimality in Lemma \ref{ratio_lemma} and show its proof.

\begin{algorithm*}[h!]
    \caption{\textsc{FedHDP:} Federated Learning with Heterogeneous Differential Privacy (Point Estimation)}
    \begin{multicols}{2}
    \begin{algorithmic}\label{FedHDP_FPE_algorithm}
        \STATE \textit{Inputs:} $\theta^0$, $\alpha^2$, $\tau^2$, $\gamma^2$, $\eta=1$, $\{\lambda_j\}_{j \in [N]}$, $r$, $\rho_{\text{np}}$, $N$.
        \STATE \textit{Outputs:} $\theta^*, \{\theta_j^*\}_{j \in [N]}$
        \STATE \textbf{At server:}
        \FOR{client $c_j$ in $\mathcal{C}^t$ \textbf{in parallel}}
        \STATE $\psi_j \leftarrow$ \textit{ClientUpdate}($\theta^t, c_j$)
        \ENDFOR
        \STATE $\theta^* \leftarrow \frac{1}{\rho_{\text{np}} N+r(1-\rho_{\text{np}}) N} \sum_{c_i \in \mathcal{C}_{\text{np}}} \psi_i$
        \STATE \hspace{0.33in} $+\frac{r}{\rho_{\text{np}} N+r(1-\rho_{\text{np}}) N} \sum_{c_i \in \mathcal{C}_{\text{p}}} \psi_i$
        \columnbreak
        \STATE \textbf{At client $c_j$:}
        \STATE \textit{ClientUpdate}($\theta^0, c_j$):
        \STATE $\theta \leftarrow \theta^0$
        \STATE $\theta_j \leftarrow \theta^0$
        \STATE $\theta \leftarrow \theta - \eta (\theta-\frac{1}{n_s}\sum_{i=1}^{n_s} x_{j,i} )$
        \STATE $ \theta_j^* \leftarrow \theta_j - \eta_j \big( (\theta_j-\frac{1}{n_s}\sum_{i=1}^{n_s} x_{j,i})+\lambda_j (\theta_j-\theta) \big)$
        \STATE $\psi \leftarrow \theta + \mathbbm{1}_{c_j \in \mathcal{C}_{\text{p}}} \mathcal{N}(0,(1-\rho_{\text{np}}) N\gamma^2)$
        \STATE return $\psi$ to server
    \end{algorithmic}
    \end{multicols}
\end{algorithm*}

\begin{lemma}[Global estimate optimality]\label{ratio_lemma}
\textsc{FedHDP} from the server's point of view, with ratio $r^*$ chosen below, is Bayes optimal (i.e., $\theta$ converges to $\theta^*$) in the considered federated point estimation problem given by
    $r^*=\frac{\sigma_c^2}{\sigma_c^2 + N_{\text{p}} \gamma^2}.$
Furthermore, the resulting variance is:
\begin{align}
    \sigma_{s,\emph{opt}}^{2} = \frac{1}{N} \left[ \frac{\sigma_c^2 (\sigma_c^2 + N_{\text{p}} \gamma^2) }{\sigma_c^2 + \rho_{\text{np}}N_{\text{p}} \gamma^2} \right].
\end{align}
\end{lemma}
\begin{proof}
Follows directly by setting $d=1$, $\gamma_1^2=0$, $\gamma_2^2=\gamma^2$, $N_1=N_{\text{np}}$, and $N_2=N_{\text{p}}$ in Lemma \ref{opt_global_estimate}.
\end{proof}

Next, we show some simulation results for the server noise $\sigma_s^2$ against the ratio $r$ for different values of $\sigma_c^2$ and $\gamma^2$ in the federated point estimation setup with $N$ clients and $\rho_{\text{np}}$ fraction of non-private clients. 
The results are shown in Figure \ref{sigma_vs_tau_gamma}, and we can see that the optimal ratio $r^*$ in Lemma \ref{ratio_lemma} minimizes the server variance as expected. Additionally, we show the resulting server noise $\sigma_s^2$ versus the fraction of non-private clients $\rho_{\text{np}}$ plotted for two scenarios. The first is the baseline \textsc{FedAvg}, and the second is the optimal \textsc{FedHDP}. We can see in Figure \ref{sigma_vs_rho} that \textsc{FedHDP} provides better noise variance at the server compared to \textsc{FedAvg}, and the gain can be significant for some values of $\rho_{\text{np}}$, even if a small percentage of clients opt out.

\begin{figure}
\captionsetup{font=small}
    \centering
    \includegraphics[width=0.45\textwidth]{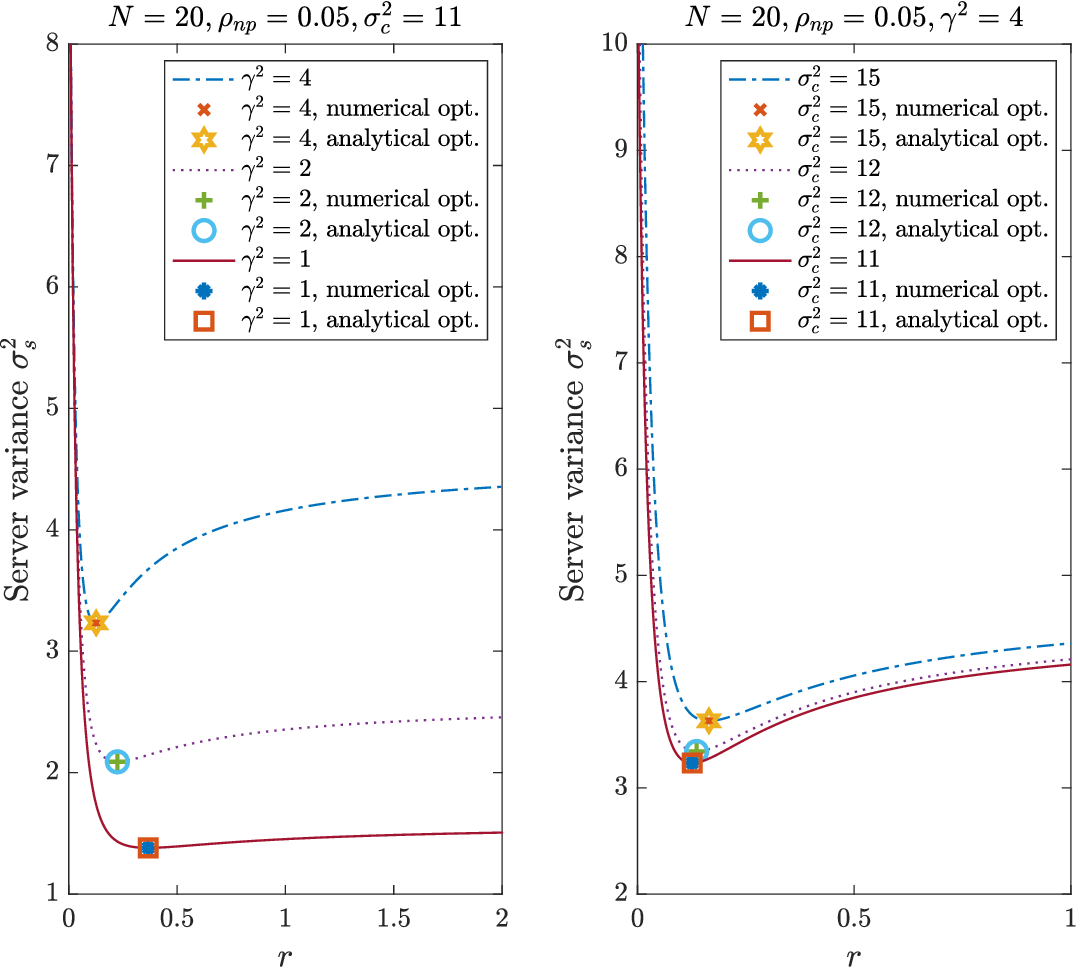}
	\captionof{figure}{Server noise variance $\sigma_s^2$ vs the ratio hyperparameter $r$. (left) Trade-off for different $\gamma^2$, (right) trade-off for different $\sigma_c^2=\alpha^2+\tau^2$.}
	\label{sigma_vs_tau_gamma}
\end{figure}

Furthermore, we consider the performance gap between the optimal solution and the other baselines in this setup. The following lemma states such results.

\begin{lemma}[Global model performance gap]
The server-side estimation mean-square error gap between \textsc{FedHDP} (optimal) and the baselines, \textsc{HDP-FedAvg} and \textsc{DP-FedAvg}, is as follows:
\begin{gather}
    \sigma_{s,\text{hdp-fedavg}}^2-\sigma_{s,opt}^2 = \frac{\rho_{\text{np}}(1-\rho_{\text{np}})^3  \gamma^4 N}{\alpha^2+\tau^2+\rho_{\text{np}}(1-\rho_{\text{np}}) \gamma^2 N} \geq 0,\label{server_noise_fedavg}\\
    \sigma_{s,\text{dp-fedavg}}^2 - \sigma_{s,\text{opt}}^2 = \frac{\rho_{\text{np}}(1-\rho_{\text{np}}) (\alpha^2+\tau^2) \gamma^2 + \rho_{\text{np}}(1-\rho_{\text{np}})^2 \gamma^4 N}{ \alpha^2+\tau^2+ \rho_{\text{np}}(1-\rho_{\text{np}}) \gamma^2 N} \geq 0,\label{server_noise_dpfedavg}\\
    \sigma_{s,\text{dp-fedavg}}^2 -  \sigma_{s,\text{hdp-fedavg}}^2 =  
    \frac{\rho_{\text{np}}(1-\rho_{\text{np}}) (\alpha^2+\tau^2) \gamma^2+\rho_{\text{np}}^2(1-\rho_{\text{np}})^2\gamma^4 N}{ \alpha^2+\tau^2+ \rho_{\text{np}}(1-\rho_{\text{np}}) \gamma^2 N} \geq 0.
\end{gather}
\end{lemma}

Notice that if $\rho_{\text{np}} \to 0$ (homogeneous private clients) or $\rho_{\text{np}} \to 1$ (no private clients), the gap vanishes as expected. In other words, the benefit of \textsc{FedHDP} on the server side is only applicable in the case of heterogeneous differential privacy.
It can be observed that if the number of clients is large ($N \to \infty$), the gap approaches $(1-\rho_{\text{np}})^2\gamma^2$ and $(1-\rho_{\text{np}})\gamma^2$ in \eqref{server_noise_fedavg} and \eqref{server_noise_dpfedavg}, respectively. Notice that having this constant gap is in contrast to $\sigma_{s,opt}^2$ vanishing as $N\to \infty$. This is expected since the noise in the observation itself decreases as the number of clients increases and, hence, having the non-private clients alone would be sufficient to (perfectly) learn the optimal global model.  Finally, if the noise $\gamma^2$ added for the privacy is large $(\gamma^2 \to \infty)$, which corresponds to a small $\epsilon,$ then the gap with optimality grows unbounded. In contrast, in this case, $\sigma_{s,opt}^2$ remains bounded, again because the optimal aggregation strategy would be to discard the private clients and to only aggregate the non-private updates. 

Follows the local estimate optimality lemma and its proof.
\begin{theorem}[Local estimate optimality]
Assuming using \textsc{FedHDP} with ratio $r^*$ in Lemma \ref{ratio_lemma}, and using the values $\lambda_{\text{np}}^*$ for non-private clients and $\lambda_{\text{p}}^*$ for private clients stated below, \textsc{FedHDP} is Bayes optimal (i.e., $\theta_j$ converges to $\theta^*_j$ for each client $j \in [N]$)
\begin{gather}
    \lambda_{\text{np}}^* = \frac{1}{\Upsilon^2}, \\
    \lambda_{\text{p}}^* =\frac{N + N \Upsilon^2 + (N-N_{\text{p}}) \Gamma^2} {N\Upsilon^2(\Upsilon^2+1) + (N-N_{\text{p}}+1) \Upsilon^2\Gamma^2+\Gamma^2}.
\end{gather}
where $\Upsilon^2 = \frac{\tau^2}{\alpha^2}$ and $\Gamma^2 = \frac{N_\text{p} \gamma^2}{\alpha^2}$.
\end{theorem}
\begin{proof}
Follows directly by setting $d=1$, $\Gamma_1^2=0$, $\Gamma_2^2=\Gamma^2$, $N_1=N_{\text{np}}$, and $N_2=N_{\text{p}}$ in Lemma \ref{opt_personalized_estimate}.
\end{proof}

We show an additional simulation result for the federated point estimation problem. We compare the effect of opting out by a client in the federated point estimation where we observe some similar effects as in the federated linear regression, see Figure \ref{gain_opting_out_fpe}.

\begin{figure}[t]
\captionsetup{font=small}
	\centering
	\begin{minipage}{.45\textwidth}
		\centering
		\includegraphics[height=2.3in]{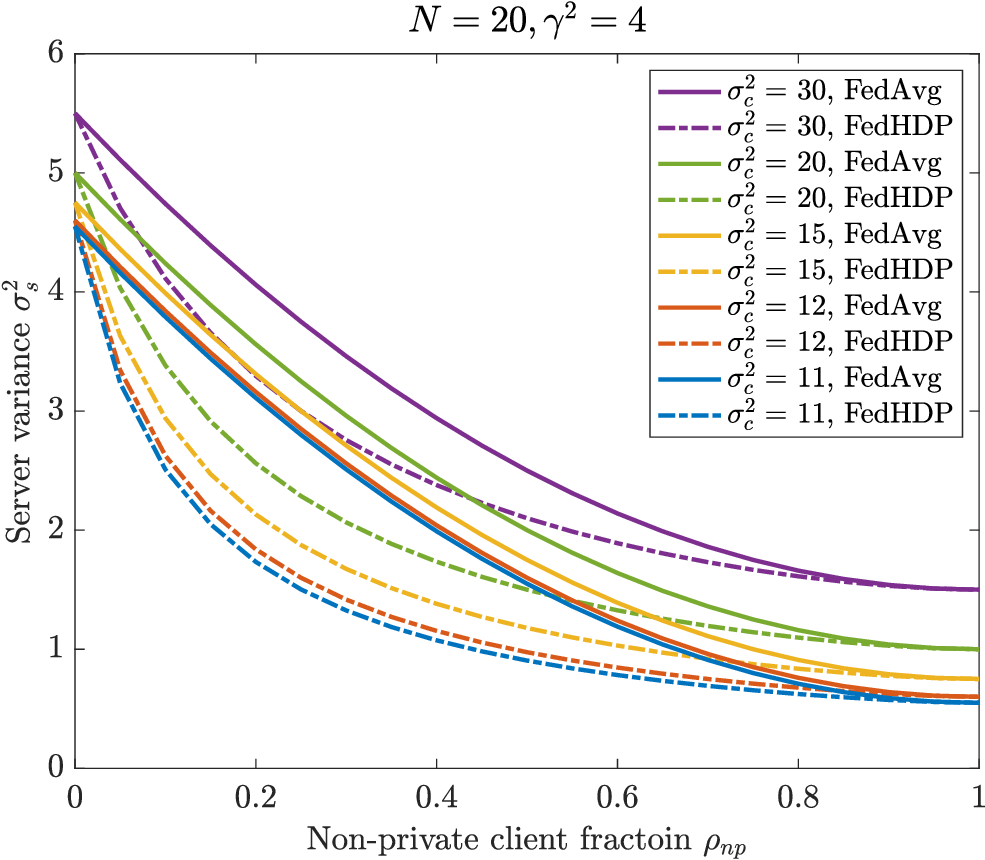}
        \caption{Server noise variance $\sigma_s^2$ vs non-private client fraction $\rho_{\text{np}}$ for the baseline \textsc{FedAvg} aggregator and optimal \textsc{FedHDP} aggregator.}
        \label{sigma_vs_rho}
	\end{minipage}
	\hspace{0.03\textwidth}
	\begin{minipage}{.45\textwidth}
	\begin{center}
        \includegraphics[height=2.3in]{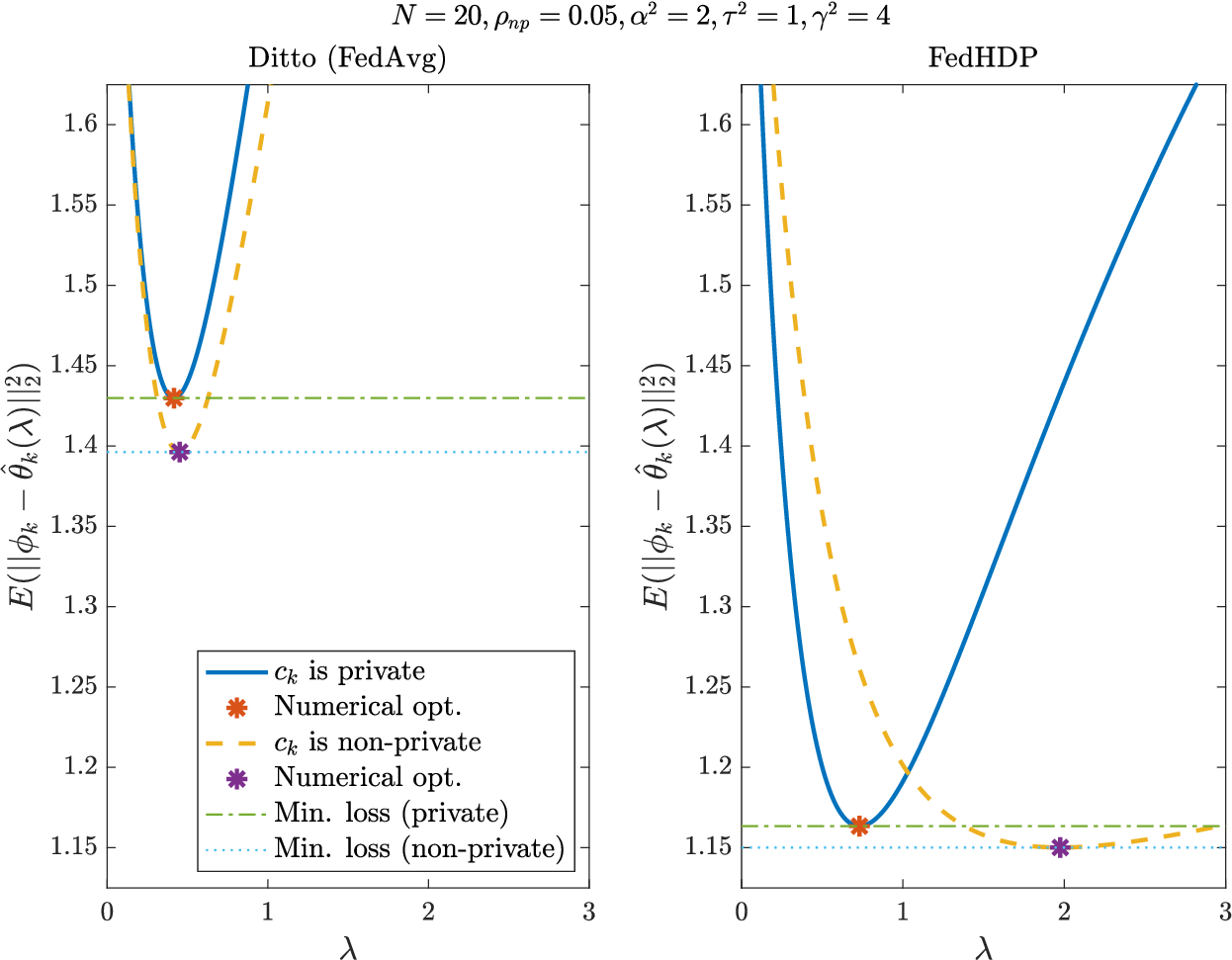}
        \caption{The effect of opting out on the personalized local model estimate as a function of $\lambda$ when employing (left) \textsc{Ditto} with vanilla \textsc{FedAvg} and (right) \textsc{FedHDP}.}
        \label{gain_opting_out_fpe}
	\end{center}
	\end{minipage}
\end{figure}

It is worth mentioning that we notice that the values of $\lambda^*$ are different for private and non-private clients. We recall that the derived expression for the personalization parameters for all clients considers the presence of data heterogeneity as well as privacy heterogeneity. In Table \ref{special_cases_lambda_fpe} we provide a few important special cases for both $\lambda_{\text{p}}^*$ and $\lambda_{\text{np}}^*$ for the considered federated point estimation problem.

\begin{table}[ht!]
	\caption{Special cases of \textsc{FedHDP} in the federated point estimation.}
	\vspace{-.15in}
	\label{special_cases_lambda_fpe}
	\begin{center}
        \resizebox{\linewidth}{!}{
		\footnotesize
		\begin{tabular}{|l|ll|ll|}
            \hline
            \rowcolor{Gray} & \multicolumn{2}{c|}{\textbf{No privacy ($\rho_{\text{np}} = 1$)}} & \multicolumn{2}{c|}{\textbf{Homogeneous privacy ($\rho_{\text{np}} = 0$)}} \\ \hline
			\cellcolor{Gray} \textbf{Homogeneous data ($\tau^2 = 0$)}   & \textsc{FedAvg}~~~ &  $\lambda_{\text{np}}^* = \infty$  & \textsc{DP-FedAvg+Ditto}~~~ & $\lambda_{\text{p}}^* = \frac{1}{\gamma^2}$\\ \hline
			\cellcolor{Gray} \textbf{Heterogeneous data ($\tau^2 > 0$)} &  \textsc{FedAvg+Ditto}~~~ & $\lambda_{\text{np}}^* = \frac{\alpha^2}{\tau^2}$ & \textsc{DP-FedAvg+Ditto}~~~ & $\lambda_{\text{p}}^* =  \frac{N \alpha^2}{N \tau^2 + (1-\rho_{\text{np}}) N \gamma^2}$\\ \hline
		\end{tabular}
        }
	\end{center}
\end{table}

{\bf Homogeneous data \& No privacy:} In this case, the optimal \textsc{FedHDP} algorithm recovers \textsc{FedAvg}~\cite{mcmahan2017communication} with no personalization ($\lambda_{\text{np}}^* = \infty$). This is not surprising as this is exactly the setup for which the vanilla federated averaging was originally introduced.

{\bf Heterogeneous data \& No privacy:} In this case, the optimal \textsc{FedHDP} algorithm recovers Ditto~\cite{li2021ditto}. Again, this is not surprising as this is exactly the setup for which Ditto has been shown to be Bayes optimal.

{\bf Homogeneous data \& Homogeneous privacy:} In this case, the optimal \textsc{FedHDP} algorithm recovers \textsc{DP-FedAvg}~\cite{andrew2019differentially}, however, with additional personalization using \textsc{Ditto}~\cite{li2021ditto}. At first, this might be surprising as there is no data heterogeneity in this case, which is where \textsc{Ditto} would be needed. However, a closer look at this case reveals that the noise added due to differential privacy creates artificial data heterogeneity that needs to be dealt with using \textsc{Ditto}. In fact, as $\epsilon \to 0$, or equivalently, as $\gamma^2 \to \infty,$ for the added noise for privacy, we observe that $\lambda_{\text{p}}^* \to 0$ implying that the local learning becomes optimal. This is expected since, in this case, the data from other (private) clients is, roughly speaking, so noisy that it is best to rely solely on local data.

{\bf Heterogeneous data \& Homogeneous privacy:} In this case, the optimal \textsc{FedHDP} algorithm again recovers \textsc{DP-FedAvg+Ditto}. Similar to the homogeneous data case, with $\gamma^2 \to \infty,$ we observe that $\lambda_{\text{p}}^* \to 0,$ i.e., the local learning becomes optimal.

\section{Experiments: Extended Experimental Results}\label{results_app}
In this section, we provide an extended version of the results of experiments conducted on the considered datasets. We describe the datasets along with the associated tasks in Tables \ref{setup_exp}, the models used in Table \ref{setup_models}, and the hyperparameters used in Table \ref{setup_hyp}.

\begin{table}[ht!]
\captionsetup{font=small}
\caption{Experiments setup: Number of clients is $N$, approximate fraction of clients per round is $q$.}
\label{setup_exp}
\begin{center}
\footnotesize
\begin{tabular}{|c|c|c|c|c|}
\hline
\textbf{Dataset} &$N$ &$q$ &\textbf{Task} &\textbf{Model} \\ \hline
non-IID MNIST &$2,\!000$ &$5\%$ &$10$-label classification & FC NN \\ \hline
FMNIST  &$3,\!383$ &$3\%$ &$10$-label classification &FC NN \\ \hline
FEMNIST  &$3,\!400$ &$3\%$ &$62$-label classification & CNN \\ \hline
\end{tabular}
\end{center}
\end{table}

\begin{table}[h!]
\captionsetup{font=small}
	\caption{Models used for experiments.}
	\label{setup_models}
	\begin{center}
		\footnotesize
		\begin{tabular}{|c|c|c|}
			\hline
			\rowcolor{Gray} \multicolumn{3}{|c|}{non-IID MNIST, and FMNIST Datasets} \\ \hline
			\textbf{Layer} & \textbf{Size}  & \textbf{Activation} \\ \hline
			Input image& $28\times 28$ & -\\ \hline
			Flatten& $784$ & - \\ \hline
			Fully connected& $50$ & ReLU\\ \hline
			Fully connected& $10$ & Softmax\\ \hline
			
			\rowcolor{Gray} \multicolumn{3}{|c|}{FEMNIST Dataset} \\ \hline
			Input image& $28\times 28$ & -\\ \hline
			Convolutional (2D)& $28 \times 28 \times 16$  & ReLU\\ \hline
			Max pooling (2D)& $14\times 14 \times 16$  & -\\ \hline
			Convolutional (2D)& $14 \times 14\times 32$  & ReLU\\ \hline
			Max pooling (2D)& $7 \times 7 \times 32$  & -\\ \hline
			Dropout $(25\%)$& - & -\\ \hline
			Flatten& $1568$ & -\\ \hline
			Fully connected& $128$ & ReLU\\ \hline
			Dropout ($50\%$)& - & -\\ \hline
			Fully connected& $62$ & Softmax\\ \hline
		\end{tabular}
	\end{center}
\end{table}

\begin{table}[h!]
\captionsetup{font=small}
	\caption{Hyperparameters used for each experiment.}
	\label{setup_hyp}
	\begin{center}
		\footnotesize
		\begin{tabular}{|c|c|c|c|c|}
			\hline
			\textbf{Hyperparameter}&\textbf{non-IID MNIST} &\textbf{FMNIST} &\textbf{FEMNIST}\\ \hline
			Batch size & \multicolumn{3}{c|}{20}  \\ \hline
			Epochs & 25 & 50 &25 \\ \hline
			$\eta , \eta_p$& 0.5 & 0.01 & 0.02 \\ \hline
			$\eta , \eta_p$ decaying factor& \multicolumn{2}{c|}{0.9 every 50 rounds} & N/A\\ \hline
			$S^0$& 0.5 & 0.5 &2.0\\ \hline
			$\eta_b$& \multicolumn{3}{c|}{0.2} \\ \hline
			$\kappa$ & \multicolumn{3}{c|}{0.5} \\ \hline
			Effective noise multiplier& 1.5 & 4.0 & 1.0\\ \hline
		\end{tabular}
	\end{center}
\end{table}

For each experiment, we presented the results of each dataset for the two baselines, i.e., \textsc{Non-Private}, \textsc{DP-FedAvg}, and \textsc{HDP-FedAvg}, as well as the proposed \textsc{FedHDP} algorithm along with the best parameters that produce the best results in the main body of the paper. In this appendix, we show the extended version of the experiments. For all experiments, training is stopped after $500$ communication rounds for each experiment. The server's test dataset is the test MNIST dataset in the non-IID MNIST experiments or the collection of the test datasets of all clients in the FMNIST and FEMNIST datasets. Note that the experiment of \textsc{FedHDP} with $r=0$ denotes the case where the server only communicates with non-private clients during training and ignores all private clients. For the baseline algorithms, we note that the personalization scheme that is used on clients is \textsc{Ditto}. We vary the ratio hyperparameter $r$ as well as the regularization hyperparameters $\lambda_{\text{p}}$ and $\lambda_{\text{np}}$ at the clients and observe the results. In the following tables, we list the entirety of the results of all experiments conducted on each dataset for various values of the hyperparameters. For readability, we highlight the rows that contain the best values of performance metrics in the proposed algorithm \textsc{FedHDP}.

\begin{table}[ht!]
\captionsetup{font=small}
\caption{Experiment results on \textit{non-IID MNIST},  $(\epsilon, \delta)=(3.6, 10^{-4})$. The variance of the performance metric across clients is between parenthesis.}

\label{results_nonIID_mnist}
\begin{center}
\resizebox{\linewidth}{!}{
\begin{tabular}{|c|c||c||c|c|c||c|c|c|}
\hline
\rowcolor{Gray} \multicolumn{9}{|c|}{$\lambda_{\text{p}} = \lambda_{\text{np}} = 0.005$} \\ \hline
\multicolumn{2}{|c||}{Setup} & \multicolumn{4}{|c||}{Global model} & \multicolumn{3}{|c|}{Personalized local models} \\ \hline
\textbf{Algorithm} & \textbf{hyperparam.} &$Acc_{g}\%$ &$Acc_{\text{g},\text{p}}\%$ & $Acc_{\text{g},\text{np}}\%$& $\bigtriangleup_{\text{g}}\%$ &$Acc_{\text{l},\text{p}}\%$ & $Acc_{\text{l},\text{np}}\%$& $\bigtriangleup_{\text{l}}\%$ \\ \hline
\textsc{Non-Private+Ditto} & - &$93.8$ & - &$93.75 (0.13)$&- &-&$99.98 (0.001)$ &- \\ \hline
\textsc{DP-FedAvg+Ditto} & - &$88.75$ &$88.64 (0.39)$ & - & - &$99.97 (0.002)$ &- &- \\ \hline
\textsc{HDP-FedAvg+Ditto} & - &$87.71$ &$87.55(0.42)$ &$88.35(0.34)$&$0.8$ &$99.97(0.001)$&$99.93(0.001)$ &$-0.04$ \\ \hline
\textsc{FedHDP} & $r\!=\!0$ & $90.7$ &$90.64(0.68)$ &$91.72(0.5)$&$1.08$ &$90.64(0.68)$&$99.94(0.001)$ &$9.2964$ \\ \hline
\textsc{FedHDP} & $r\!=\!0.001$ &$91.74$&$91.65(0.39)$ &$92.61(0.27)$ &$0.94$ &$99.94(0.001)$&$99.95(0.001)$ &$0.01$ \\ \hline
\rowcolor{LightCyan} \textsc{FedHDP} & $r\!=\!0.01$ &$92.48$ &$92.43(0.30)$ &$93.30(0.21)$&$0.88$ &$99.94(0.001)$&$99.94(0.001)$ &$0$ \\ \hline
\textsc{FedHDP} & $r\!=\!0.025$ &$92.36$ &$92.28(0.27)$ &$92.96(0.19)$&$0.68$ &$99.95(0.001)$&$99.91(0.001)$ &$-0.04$ \\ \hline
\textsc{FedHDP} & $r\!=\!0.1$ &$90.7$ &$90.59(0.34)$ &$91.31(0.26)$&$0.73$ &$99.97(0.001)$&$99.95(0.001)$ &$-0.02$ \\ \hline

\rowcolor{Gray} \multicolumn{9}{|c|}{$\lambda_{\text{p}} = \lambda_{\text{np}} = 0.05$} \\ \hline
\textsc{Non-Private+Ditto} & - &$93.81$ & - &$93.76 (0.13)$&- &-&$99.93 (0.001)$ &- \\ \hline
\textsc{DP-FedAvg+Ditto} & - &$87.98$ &$87.97 (0.39)$ & - & - &$99.84 (0.002)$ &- &- \\ \hline
\textsc{HDP-FedAvg+Ditto} & - &$89.64$ &$89.50(0.32)$ &$90.55(0.24)$&$1.05$ &$99.83(0.002)$&$99.84(0.002)$ &$0.01$ \\ \hline
\textsc{FedHDP} & $r\!=\!0$ & $91$ &$91.12(0.48)$ &$92.08(0.41)$&$0.96$ &$91.12(0.48)$&$99.76(0.002)$ &$8.65$ \\ \hline
\textsc{FedHDP} & $r\!=\!0.001$ &$92.15$ &$92.10(0.33)$ &$92.88(0.25)$ &$0.78$ &$99.81(0.002)$&$99.78(0.002)$ &$-0.03$ \\ \hline
\textsc{FedHDP} & $r\!=\!0.01$ &$92.45$ &$92.39(0.33)$ &$93.26(0.25)$&$0.87$ &$99.81(0.002)$&$99.78(0.003)$ &$-0.03$ \\ \hline
\textsc{FedHDP} & $r\!=\!0.025$ & $92.14$&$92.09(0.35)$ &$93.01(0.26)$&$0.92$ &$99.85(0.002)$&$99.8(0.002)$ &$-0.05$ \\ \hline
\textsc{FedHDP} & $r\!=\!0.1$ &$90.7$ &$90.82(0.29)$ &$91.55(0.21)$&$0.73$ &$99.87(0.002)$&$99.80(0.003)$ &$-0.06$ \\ \hline

\rowcolor{Gray} \multicolumn{9}{|c|}{$\lambda_{\text{p}} = \lambda_{\text{np}} = 0.25$} \\ \hline
\textsc{Non-Private+Ditto} & - &$93.79$ & - &$93.75 (0.13)$&- &-&$99.10 (0.007)$ &- \\ \hline
\textsc{DP-FedAvg+Ditto} & - &$88.26$ &$88.23 (0.41)$ & - & - &$98.23 (0.017)$ &- &- \\ \hline
\textsc{HDP-FedAvg+Ditto} & - &$88.27$ &$88.09(0.49)$ &$89.08(0.4)$&$0.99$ &$98.14(0.017)$&$98.13(0.017)$ &$-0.01$ \\ \hline
\textsc{FedHDP} & $r\!=\!0$ & $90.42$ &$90.41(0.69)$ &$91.41(0.58)$&$1.0$ &$90.41(0.69)$&$98.06(0.023)$ &$7.65$ \\ \hline
\textsc{FedHDP} & $r\!=\!0.001$ &$92.18$&$92.12(0.34)$ &$92.85(0.26)$ &$0.73$ &$98.47(0.015)$&$98.08(0.024)$ &$-0.39$ \\ \hline
\textsc{FedHDP} & $r\!=\!0.01$ &$92.41$ &$92.35(0.29)$ &$93.19(0.21)$&$0.83$ &$98.62(0.015)$&$98.33(0.019)$ &$-0.28$ \\ \hline
\textsc{FedHDP} & $r\!=\!0.025$ &$92.5$ &$92.42(0.28)$ &$93.19(0.19)$&$0.77$ &$98.71(0.011)$&$98.41(0.017)$ &$-0.3$ \\ \hline
\textsc{FedHDP} & $r\!=\!0.1$ &$91.17$ &$91.10(0.32)$ &$91.94(0.24)$&$0.84$ &$98.71(0.012)$&$98.60(0.013)$ &$-0.11$ \\ \hline

\end{tabular}
}
\end{center}
\end{table}

\begin{table}[ht!]
\captionsetup{font=small}
\caption{Experiment results on \textit{Skewed non-IID MNIST},  $(\epsilon, \delta)=(3.6, 10^{-4})$. The variance of the performance metric across clients is between parenthesis.}
\label{results_skewed_nonIID_mnist}
\begin{center}
\resizebox{\linewidth}{!}{
\begin{tabular}{|c|c||c||c|c|c||c|c|c|}
\hline
\rowcolor{Gray} \multicolumn{9}{|c|}{$\lambda_{\text{p}} = \lambda_{\text{np}} = 0.005$} \\ \hline
\multicolumn{2}{|c||}{Setup} & \multicolumn{4}{|c||}{Global model} & \multicolumn{3}{|c|}{Personalized local models} \\ \hline
\textbf{Algorithm} & \textbf{hyperparam.} &$Acc_{g}\%$ &$Acc_{\text{g},\text{p}}\%$ & $Acc_{\text{g},\text{np}}\%$& $\bigtriangleup_{\text{g}}\%$ &$Acc_{\text{l},\text{p}}\%$ & $Acc_{\text{l},\text{np}}\%$& $\bigtriangleup_{\text{l}}\%$ \\ \hline
\textsc{Non-Private+Ditto} & - &$93.67$ & - &$93.62 (0.15)$&- &-&$99.98 (0.001)$ &- \\ \hline
\textsc{DP-FedAvg+Ditto} & - &$88.93$ &$88.87 (0.35)$ & - & - &$99.98 (0.001)$ &- &- \\ \hline
\textsc{HDP-FedAvg+Ditto} & - &$88.25$ &$88.05(0.39)$ &$89.98(0.05)$&$1.93$ &$99.97(0.001)$&$99.85(0.001)$ &$-0.11$ \\ \hline
\textsc{FedHDP} & $r\!=\!0$ & $10.27$ &$7.1(6.5)$ &$100(0)$&$92.9$ &$7.1(6.5)$&$100(0)$ &$92.9$ \\ \hline
\textsc{FedHDP} & $r\!=\!0.025$ &$87.11$&$86.61(1.10)$ &$98.16(0.01)$ &$11.55$ &$99.99(0.001)$&$99.91(0.001)$ &$-0.08$ \\ \hline
\rowcolor{LightCyan} \textsc{FedHDP} & $r\!=\!0.1$ &$90.36$ &$89.96(0.37)$ &$97.45(0.01)$&$7.49$ &$99.97(0.001)$&$99.76(0.003)$ &$-0.21$ \\ \hline
\textsc{FedHDP} & $r\!=\!0.5$ &$88.44$ &$88.14(0.36)$ &$93.36(0.03)$&$5.2$ &$99.98(0.001)$&$99.93(0.001)$ &$-0.05$ \\ \hline
\textsc{FedHDP} & $r\!=\!0.75$ &$89.14$ &$88.92(0.37)$ &$92.43(0.06)$&$3.5$ &$99.97(0.001)$&$99.93(0.001)$ &$-0.04$ \\ \hline
\rowcolor{LightCyan} \textsc{FedHDP} & $r\!=\!0.9$ &$87.96$ &$87.69(0.56)$ &$92.97(0.04)$&$5.28$ &$99.98(0.001)$&$99.96(0.001)$ &$-0.02$ \\ \hline

\rowcolor{Gray} \multicolumn{9}{|c|}{$\lambda_{\text{p}} = \lambda_{\text{np}} = 0.05$} \\ \hline
\textsc{Non-Private+Ditto} & - &$93.67$ & - &$93.62 (0.15)$&- &-&$99.93 (0.001)$ &- \\ \hline
\textsc{DP-FedAvg+Ditto} & - &$88.78$ &$88.70 (0.53)$ & - & - &$99.83 (0.002)$ &- &- \\ \hline
\textsc{HDP-FedAvg+Ditto} & - &$88.33$ &$88.11(0.46)$ &$91.67(0.04)$&$3.56$ &$99.87(0.001)$&$99.61(0.001)$ &$-0.26$ \\ \hline
\textsc{FedHDP} & $r\!=\!0$ & $10.28$ &$7.1(6.5)$ &$100(0)$&$92.9$ &$7.1(6.5)$&$100(0)$ &$92.9$ \\ \hline
\textsc{FedHDP} & $r\!=\!0.025$ &$87.92$&$87.45(0.99)$ &$98.1(0.01)$ &$10.65$ &$99.95(0.001)$&$99.75(0.003)$ &$-0.2$ \\ \hline
\textsc{FedHDP} & $r\!=\!0.1$ &$88.98$ &$88.64(0.52)$ &$96.18(0.02)$&$7.54$ &$99.9(0.001)$&$99.47(0.005)$ &$-0.43$ \\ \hline
\textsc{FedHDP} & $r\!=\!0.5$ &$88.22$ &$87.9(0.38)$ &$93.43(0.03)$&$5.33$ &$99.85(0.002)$&$99.42(0.008)$ &$-0.42$ \\ \hline
\textsc{FedHDP} & $r\!=\!0.75$ &$88.56$ &$88.37(0.35)$ &$91.33(0.04)$&$2.94$ &$99.84(0.002)$&$99.52(0.004)$ &$-0.33$ \\ \hline
\textsc{FedHDP} & $r\!=\!0.9$ &$89.19$ &$88.97(0.4)$ &$92.24(0.03)$&$3.27$ &$99.88(0.001)$&$99.58(0.005)$ &$-0.3$ \\ \hline

\rowcolor{Gray} \multicolumn{9}{|c|}{$\lambda_{\text{p}} = \lambda_{\text{np}} = 0.25$} \\ \hline
\textsc{Non-Private+Ditto} & - &$93.67$ & - &$93.62 (0.15)$&- &-&$99.09 (0.007)$ &- \\ \hline
\textsc{DP-FedAvg+Ditto} & - &$87.78$ &$87.71 (0.53)$ & - & - &$98.15 (0.02)$ &- &- \\ \hline
\textsc{HDP-FedAvg+Ditto} & - &$89.4$ &$89.22(0.26)$ &$92.01(0.03)$&$2.79$ &$98.27(0.02)$&$97.62(0.03)$ &$-0.64$ \\ \hline
\textsc{FedHDP} & $r\!=\!0$ & $10.27$ &$7.1(6.5)$ &$100(0)$&$92.9$ &$7.1(6.5)$&$100(0)$ &$92.9$ \\ \hline
\textsc{FedHDP} & $r\!=\!0.025$ &$87.51$&$87.01(0.9)$ &$98.49(0.01)$ &$11.48$ &$98.69(0.01)$&$99.09(0.006)$ &$-0.4$ \\ \hline
\textsc{FedHDP} & $r\!=\!0.1$ &$89.05$ &$88.66(0.54)$ &$96.8(0.02)$&$8.14$ &$98.69(0.012)$&$98.55(0.008)$ &$-0.13$ \\ \hline
\textsc{FedHDP} & $r\!=\!0.5$ &$88.18$ &$88.11(0.55)$ &$93.43(0.03)$&$5.32$ &$98.32(0.014)$&$97.80(0.01)$ &$-0.52$ \\ \hline
\textsc{FedHDP} & $r\!=\!0.75$ &$87.96$ &$87.8(0.33)$ &$92.58(0.03)$&$4.78$ &$98.25(0.017)$&$97.5(0.02)$ &$-0.75$ \\ \hline
\textsc{FedHDP} & $r\!=\!0.9$ &$88.26$ &$87.93(0.41)$ &$91.67(0.03)$&$3.74$ &$98.25(0.02)$&$97.68(0.02)$ &$-0.57$ \\ \hline
\end{tabular}
}
\end{center}
\end{table}

\begin{table}[ht!]
\captionsetup{font=small}
\caption{Experiment results on \textit{FMNIST},  $(\epsilon, \delta)=(0.6, 10^{-4})$. The variance of the performance metric across clients is between parenthesis.}
\label{results_Fmnist}
\begin{center}
\resizebox{\linewidth}{!}{
\begin{tabular}{|c|c||c||c|c|c||c|c|c|}
\hline
\rowcolor{Gray} \multicolumn{9}{|c|}{$\lambda_{\text{p}} = \lambda_{\text{np}} = 0.005$} \\ \hline
\multicolumn{2}{|c||}{Setup} & \multicolumn{4}{|c||}{Global model} & \multicolumn{3}{|c|}{Personalized local models} \\ \hline
\textbf{Algorithm} & \textbf{hyperparam.} &$Acc_{g}\%$ &$Acc_{\text{g},\text{p}}\%$ & $Acc_{\text{g},\text{np}}\%$& $\bigtriangleup_{\text{g}}\%$ &$Acc_{\text{l},\text{p}}\%$ & $Acc_{\text{l},\text{np}}\%$& $\bigtriangleup_{\text{l}}\%$ \\ \hline
\textsc{Non-Private+Ditto} & - &$89.65$ & - &$89.35 (1.68)$&- &-&$93.95 (0.67)$ &- \\ \hline
\textsc{DP-FedAvg+Ditto} & - &$71.76$ &$71.42 (2.79)$ & - & - &$91.01 (0.94)$ &- &- \\ \hline
\textsc{HDP-FedAvg+Ditto} & - &$75.87$ &$75.77(2.84)$ &$74.41(2.8)$&$-1.36$ &$90.45(1.02)$&$92.32(0.8)$ &$1.87$ \\ \hline
\textsc{FedHDP} & $r\!=\!0$ & $81.78$ &$80.73(2.45)$ &$89.35(1.5)$&$8.62$ &$80.73(2.4)$&$95.80(0.39)$ &$15.06$ \\ \hline
\textsc{FedHDP} & $r\!=\!0.01$ &$85.38$&$84.61(2.05)$ &$89.3(1.26)$ &$4.69$ &$93.26(0.74)$&\cellcolor{LightCyan} $95.94(0.41)$ &$2.67$ \\ \hline
\textsc{FedHDP} & $r\!=\!0.025$ &$85.7$&$84.93(1.97)$ &$89.58(1.29)$ &$4.65$ &$93.04(0.76)$&$95.22(0.54)$ &$2.18$ \\ \hline
\textsc{FedHDP} & $r\!=\!0.05$ &$85.21$ &$84.68(1.99)$ &$86.22(1.76)$&$1.54$ &$92.87(0.74)$&$95.40(0.51)$ &$2.53$ \\ \hline
\textsc{FedHDP} & $r\!=\!0.1$ &$81.76$ &$81.45(2.45)$ &$81.96(1.84)$&$0.51$ &$92.47(0.78)$&$94.83(0.52)$ &$2.36$ \\ \hline
\textsc{FedHDP} & $r\!=\!0.5$ &$78.19$ &$78.02(2.59)$ &$76.48(3.02)$&$-1.53$ &$91.08(0.94)$&$92.59(0.83)$ &$1.51$ \\ \hline

\rowcolor{Gray} \multicolumn{9}{|c|}{$\lambda_{\text{p}} = \lambda_{\text{np}} = 0.05$} \\ \hline
\textsc{Non-Private+Ditto} & - &$89.65$ & - &$89.35 (1.68)$&- &-&$94.53 (0.59)$ &- \\ \hline
\textsc{DP-FedAvg+Ditto} & - &$77.61$ &$77.62(2.55)$ & - & - &$90.04 (1.04)$ &- &- \\ \hline
\textsc{HDP-FedAvg+Ditto} & - &$72.42$ &$77.14(2.72)$ &$76.28(2.76)$&$-0.86$ &$89.12(1.15)$&$90.92(0.91)$ &$1.8$ \\ \hline
\textsc{FedHDP} & $r\!=\!0$ &$82.61$&$80.72(2.45)$ &$89.45(1.51)$ &$8.73$ &$80.72(2.45)$&$95.57(0.38)$ &$14.84$ \\ \hline
\rowcolor{LightCyan} \textsc{FedHDP} & $r\!=\!0.01$ &$86.88$&$85.36(1.89)$ &$90.02(1.28)$ &$4.66$ &$93.76(0.68)$ &\cellcolor{white}$95.78(0.36)$ &\cellcolor{white} $2.02$ \\ \hline
\textsc{FedHDP} & $r\!=\!0.025$ &$86.03$&$84.22(1.98)$ &$88.40(1.68)$ &$4.18$ &$93.53(0.68)$&$95.11(0.54)$ &$0.52$ \\ \hline
\textsc{FedHDP} & $r\!=\!0.05$ &$84.65$ &$82.68(2.16)$ &$86.68(1.67)$&$4.00$ &$92.92(0.76)$&$95.02(0.55)$ &$2.1$ \\ \hline
\textsc{FedHDP} & $r\!=\!0.1$ &$82.89$ &$81.72(2.28)$ &$83.68(2.18)$&$1.96$ &$92.38(0.83)$&$94.25(0.61)$ &$1.87$ \\ \hline
\textsc{FedHDP} & $r\!=\!0.5$ &$76.59$ &$78.05(2.60)$ &$78.04(2.66)$&$-0.0041$ &$89.63(1.10)$&$91.67(0.84)$ &$2.04$ \\ \hline

\rowcolor{Gray} \multicolumn{9}{|c|}{$\lambda_{\text{p}} = \lambda_{\text{np}} = 0.25$} \\ \hline
\textsc{Non-Private+Ditto} & - &$89.66$ & - &$89.36 (1.69)$&- &-&$94.32 (0.64)$ &- \\ \hline
\textsc{DP-FedAvg+Ditto} & - &$70.1$ &$70.40 (2.91)$ & - & - &$88.38 (1.25)$ &- &- \\ \hline
\textsc{HDP-FedAvg+Ditto} & - &$72.67$ &$72.05(2.83)$ &$74.31(2.42)$&$2.26$ &$87.54(1.34)$&$87.39(1.23)$ &$-0.15$ \\ \hline
\textsc{FedHDP} & $r\!=\!0$ & $81.93$ &$80.85(2.39)$ &$89.71(1.39)$&$8.86$ &$80.85(2.39)$&$94.56(0.50)$ &$13.71$ \\ \hline
\textsc{FedHDP} & $r\!=\!0.01$ &$85.31$&$84.55(1.98)$ &$89.27(1.54)$ &$4.72$ &$92.76(0.78)$&$94.77(0.5)$ &$2.01$ \\ \hline
\textsc{FedHDP} & $r\!=\!0.025$ &$86.17$&$85.52(1.92)$ &$89.25(1.31)$ &$3.73$ &$92.46(0.85)$&$94.35(0.57)$ &$1.89$ \\ \hline
\textsc{FedHDP} & $r\!=\!0.05$ &$83.97$ &$83.5(2.19)$ &$85.4(1.88)$&$1.9$ &$91.69(0.91)$&$93.9(0.53)$ &$2.21$ \\ \hline
\textsc{FedHDP} & $r\!=\!0.1$ &$83.78$ &$83.22(2.11)$ &$84.94(2.12)$&$1.72$ &$90.9(1.02)$&$92.62(0.73)$ &$1.72$ \\ \hline
\textsc{FedHDP} & $r\!=\!0.5$ &$74.64$ &$74.63(3.23)$ &$72.54(2.93)$&$-2.09$ &$88.12(1.34)$&$88.69(1.28)$ &$0.57$ \\ \hline

\end{tabular}
}
\end{center}
\end{table}

\begin{table}[ht!]
\captionsetup{font=small}
\caption{Experiment results on \textit{FEMNIST},  $(\epsilon, \delta)=(4.1, 10^{-4})$. The variance of the performance metric across clients is between parenthesis.}
\label{results_Femnist}
\begin{center}
\resizebox{\linewidth}{!}{
\begin{tabular}{|c|c||c||c|c|c||c|c|c|}
\hline
\rowcolor{Gray} \multicolumn{9}{|c|}{$\lambda_{\text{p}} = \lambda_{\text{np}} = 0.005$} \\ \hline
\multicolumn{2}{|c||}{Setup} & \multicolumn{4}{|c||}{Global model} & \multicolumn{3}{|c|}{Personalized local models} \\ \hline
\textbf{Algorithm} & \textbf{hyperparam.} &$Acc_{g}\%$ &$Acc_{\text{g},\text{p}}\%$ & $Acc_{\text{g},\text{np}}\%$& $\bigtriangleup_{\text{g}}\%$ &$Acc_{\text{l},\text{p}}\%$ & $Acc_{\text{l},\text{np}}\%$& $\bigtriangleup_{\text{l}}\%$ \\ \hline
\textsc{Non-Private+Ditto} & - &$81.56$ & - &$81.72 (1.37)$&- &-&$73.86 (1.5)$ &- \\ \hline
\textsc{DP-FedAvg+Ditto} & - &$75.39$ &$76.1 (1.73)$ & - & - &$71.3 (1.47)$ &- &- \\ \hline
\textsc{HDP-FedAvg+Ditto} & - &$74.96$ &$75.6(1.69)$& $77.84(1.47)$& $2.24$& $71.44(1.5)$& $70.03(1.18)$& $-1.4$ \\ \hline
\textsc{FedHDP} & $r\!=\!0$ & $72.77$ &$75.34(2.6)$ &$85.7(1.14)$&$10.36$ &$75.34(2.6)$&$72.2(1.22)$ &$-3.14$ \\ \hline
\textsc{FedHDP} & $r\!=\!0.001$ &$73.66$&$76.22(2.44)$& $86.04(1.2)$& $9.82$& $72.78(1.51)$& $71.74(1.34)$& $-1.04$ \\ \hline
\textsc{FedHDP} & $r\!=\!0.01$ &$74.75$&$77.16(2.26)$& $86.4(1.07)$& $9.24$& $73.24(1.52)$& $71.49(1.29)$& $-1.76$ \\ \hline
\textsc{FedHDP} & $r\!=\!0.025$ &$75.37$ &$77.66(2.06)$& $86.56(1)$& $8.9$& $73.11(1.55)$& $71.62(1.18)$& $-1.49$ \\ \hline
\textsc{FedHDP} & $r\!=\!0.1$ &$76.47$ &$77.99(1.68)$& $84.36(1.31)$& $6.37$& $72.3(1.5)$& $69.93(1.15)$& $-2.37$ \\ \hline
\textsc{FedHDP} & $r\!=\!0.5$ &$76.11$ &$76.69(1.62)$& $80.82(1.3)$& $4.13$& $71.32(1.56)$& $70.11(1.26)$& $-1.2$ \\ \hline

\rowcolor{Gray} \multicolumn{9}{|c|}{$\lambda_{\text{p}} = \lambda_{\text{np}} = 0.05$} \\ \hline
\textsc{Non-Private+Ditto} & - &$81.95$ & - &$82.09(1.38)$&- &-&$82.89(1.13)$ &- \\ \hline
\textsc{DP-FedAvg+Ditto} & - &$75.42$ &$75.86(1.82)$ & - & - &$74.69(1.29)$ &- &- \\ \hline
\textsc{HDP-FedAvg+Ditto} & - &$75.12$ &$75.87(1.65)$& $78.59(1.58)$& $2.72$& $74.67(1.34)$& $75.95(1.12)$& $1.28$ \\ \hline
\textsc{FedHDP} & $r\!=\!0$ & $72.65$ &$75.9(2.5)$& $86.19(1.27)$& $10.29$& $80.59(1.13)$& $81.97(0.88)$& $1.38$ \\ \hline
\textsc{FedHDP} & $r\!=\!0.001$ &$73.31$&$75.9(2.5)$& $86.19(1.27)$& $10.29$& $80.59(1.13)$& $81.97(0.88)$& $1.38$ \\ \hline
\textsc{FedHDP} & $r\!=\!0.01$ &$74.68$&$77.16(2.27)$& $86.25(1.05)$& $9.09$& $80.74(1.06)$& $82.13(0.98)$& $1.38$ \\ \hline
\textsc{FedHDP} & $r\!=\!0.025$ &$75.22$ &$77.43(2.09)$& $85.95(1.12)$& $8.52$& $80(1.16)$& $80.99(0.92)$& $1.01$ \\ \hline
\rowcolor{LightCyan} \textsc{FedHDP} & $r\!=\!0.1$ &$76.52$ &$77.91(1.67)$& $83.9(1.27)$& $5.99$& $77.9(1.22)$& $79.15(0.99)$& $1.25$ \\ \hline
\textsc{FedHDP} & $r\!=\!0.5$ &$76.15$ &$76.55(1.68)$& $80.04(1.62)$& $3.49$& $75.43(1.25)$& $77.13(1.17)$& $1.7$ \\ \hline

\rowcolor{Gray} \multicolumn{9}{|c|}{$\lambda_{\text{p}} = \lambda_{\text{np}} = 0.25$} \\ \hline
\textsc{Non-Private+Ditto} & - &$81.66$ & - &$81.79(1.38)$&- &-&$84.46(0.89)$ &- \\ \hline
\textsc{DP-FedAvg+Ditto} & - &$75.99$ &$76.56(1.6)$ & - & - &$73.06(1.46)$ &- &- \\ \hline
\textsc{HDP-FedAvg+Ditto} & - &$75.31$ &$75.67(1.71)$& $78.88(1.59)$& $3.21$& $72.58(1.45)$& $74.98(1.43)$& $2.4$ \\ \hline
\textsc{FedHDP} & $r\!=\!0$ & $72.89$ &$75.5(2.56)$& $86.09(1.28)$& $10.6$& $75.5(2.56)$& $84.77(0.8)$& $9.28$ \\ \hline
\textsc{FedHDP} & $r\!=\!0.001$ &$73.41$&$76.01(2.51)$& $85.99(1.13)$& $9.97$& $80.98(1.06)$& $84.71(0.83)$& $3.73$ \\ \hline
\rowcolor{LightCyan} \textsc{FedHDP} & $r\!=\!0.01$ &$74.86$&$77.31(2.18)$& $86.73(0.98)$& $9.42$& $81.19(1.02)$& $84.68(0.78)$& $3.49$ \\ \hline
\textsc{FedHDP} & $r\!=\!0.025$ &$75.41$ &$77.68(2.1)$& $86.23(1.03)$& $8.55$& $80.01(1.1)$& $83.2(0.8)$& $3.19$ \\ \hline
\textsc{FedHDP} & $r\!=\!0.1$ &$76.62$ &$77.82(1.68)$& $83.35(1.27)$& $5.52$& $76.99(1.24)$& $78.96(1.04)$& $1.97$ \\ \hline
\textsc{FedHDP} & $r\!=\!0.5$ &$75.89$ &$76.71(1.65)$& $80.01(1.4)$& $3.3$& $73.48(1.37)$& $75.49(1.57)$& $2.01$ \\ \hline

\end{tabular}
}
\end{center}
\end{table}

\clearpage

\section{Broader Impact \& Limitations}\label{broader_impact}
In this paper, we investigated a heterogeneous privacy setup where different clients may have different levels of privacy protection guarantees, and in particular explored an extreme setup where some clients may opt out of privacy guarantees, to gain improvements in performance. However, the choice to loosen privacy requirements is heavily dependent on the client, the setting, and their valuation of their data. Moreover, since the algorithm orients the model towards the less private clients, it may introduce unfairness for the more private clients. Additionally, the server may have its own requirements during training, for example, a lower limit to the fraction of less private clients or vice versa, where the privacy choices may be overridden by the server. Overall, we believe that the interplay between all of these different societal aspects need to be carefully studied before the proposed mechanisms in this paper can be practically used.

We acknowledge that some of the assumptions in the theoretical study of the federated linear regression and federated point estimation setups are unrealistic, but similar assumptions have been made in prior work of \cite{li2021ditto}. For example, we neglected the effect of clipping, assumed that all clients have the same number of samples, and assumed the covariance matrix is diagonal. On the other hand, for more complex models such as the ones used in the experiments, finding the best values of weights to be used in the aggregator at the server as well as the personalization parameters for each client is not straightforward, as some of the theoretical constructs in this paper are not estimable from data. Nevertheless, we believe that the theoretical studies in this paper can be used to build intuition about heterogeneous privacy setups and can be used as guiding principles for designing new algorithms.

Finally, the \textsc{FedHDP} algorithm comes with two additional hyperparameters compared with \textsc{DP-FedAvg}: $r$ (the weight ratio of private and non-private clients at the server), and $\lambda_j$ (the degree of personalization at client $c_j$). In this paper, we chose $r$ based on grid search, however that will naturally incur a loss of privacy that we did not carefully study. In fact, we assumed there is no privacy loss due to the tuning of such hyperparameter in our paper. Having said that, recent work by~\cite{papernot2021hyperparameter} suggests that the privacy loss due to such hyperparameter tuning based on private training runs might be manageable but the exact interplay remains to be studied in future work.

\end{document}